\RequirePackage{fix-cm}
\documentclass[]{interact}

\usepackage[caption=false]{subfig}

\usepackage{mathtools}
\mathtoolsset{showonlyrefs}

\usepackage{adjustbox}
\usepackage{wrapfig}
\usepackage{float}
\usepackage{placeins}
 
\usepackage{algorithm}
\usepackage{algpseudocode}

\usepackage{enumitem}

\usepackage{tikz}
\usetikzlibrary{
  positioning,
  fit,
  backgrounds,
  arrows.meta,
  calc
}

\usepackage[T1]{fontenc}
\usepackage{xcolor}

\usepackage[natbibapa,nodoi]{apacite}
\usepackage[
  colorlinks=true,
  citecolor=blue,
  urlcolor=blue,
  linkcolor=blue
]{hyperref}

\usepackage{commands}

\begin{document}

\articletype{Preprint}

\title{Adversarial observations in state-space models for robust
reinforcement learning}

\author{
\name{
Miguel Santos-Pascual\textsuperscript{a,b}
\thanks{CONTACT Miguel Santos-Pascual.
Email: miguel.santos@icmat.es}
and
David Ríos Insua\textsuperscript{a}
}
\affil{
\textsuperscript{a}Institute of Mathematical Sciences,
Spanish National Research Council, Madrid, Spain;
\textsuperscript{b}Escuela de Doctorado,
Universidad Autónoma de Madrid, Madrid, Spain
}
}

\maketitle

\begin{abstract}
Decision-making under partial or adversarial observability requires
accurate inference of the environment's latent state and its associated
uncertainty. This work analyses adversarial attacks on linear
state-space models, where the attacker alters observations subject to
likelihood constraints that ensure that the perturbations remain
statistically consistent with the observation model. We analyse how
such adversarial yet plausible observations shift inference about
latent states and affect downstream decision-making and the performance
of reinforcement learning agents. In addition, we introduce an online
Bayesian defence based on directional covariance adaptation, which
selectively reduces the influence of observations by comparing their
estimated impact with that of the computed most disruptive direction,
while preserving information in the remaining orthogonal observation
subspace. The proposed framework provides a principled approach to
constructing robust inference and decision-making systems, with direct
relevance to safety-critical applications such as robotics, where
reliable operation under sensor noise, partial failures, and adversarial
conditions is essential.
\end{abstract}

\begin{keywords}
Adversarial Machine Learning; Reinforcement Learning; State-Space
Models; Kalman Filter; online Bayesian adaptation; covariance
adaptation
\end{keywords}

\section{Introduction}
\label{sec:introduction}

Machine learning (ML) systems increasingly support autonomous decision-making in high-stakes settings such as robotics, transportation, finance, homeland security, and critical infrastructure protection. In these domains, robustness and reliability are essential because failures can translate into physical harm, financial loss, or operational breakdown \citep{garcia2015safe}. A recurring weakness is that many ML pipelines implicitly assume that training and deployment data are independent and identically distributed (i.i.d.), even though actual deployments often violate this assumption through sensor drift, changing environments, and distribution shift \citep{murphy2023}. In security-relevant contexts, this problem is amplified because adversaries can deliberately manipulate observations, rewards, or the environment to induce targeted shifts and drive the system towards failure \citep{barreno2006secure,cina2023wild,riosinsua2023adversarial}.

These concerns motivate the relatively recent field of adversarial machine learning (AML), which studies how malicious perturbations can fool learning systems and how to design defenses against them \citep{biggio2018wildpatterns,goodfellow2015explaining}. While AML is well developed in supervised learning, reinforcement learning (RL) poses distinct challenges, since the goal is to degrade sequential decision-making and thereby reduce the agent's long-term return \citep{pinto2017robust,huang2017policyattack}. Attacks in RL are commonly grouped into three broad categories: attacks on the agent's observations, on the environment's reward signal, and on the agent's policy. In this work, we focus on the first category, observation-level attacks. These include observation spoofing or occlusion, training-time state manipulation through poisoning or backdoor attacks, and strategic perturbations that exploit the agent's exploration process to drive it toward unsafe regions of the state space \citep{behzadan2017policyinduction,gleave2019adversarialpolicies,kiourti2020trojdrl,rathbun2025adversarial}.

In parallel, recent progress in RL has also been shaped by a broader shift towards long-memory sequence models \citep{bengio1994longterm, vaswani2017transformer}. Structured State Space Models (Structured SSMs), including S4 \citep{gu2022s4}, S5 \citep{smith2023s5}, and Mamba \citep{gu2023mamba}, provide long-term temporal context with computational profiles that can be attractive for control workloads; see also \citet{somvanshi2024s4mamba} for a broader discussion of S4 and Mamba architectures. Various surveys summarize how these models relate to and differ from Transformers and classical recurrent networks, and why they have become attractive in modern sequence modeling \citep{somvanshi2024s4mamba,vaswani2017transformer}. Nevertheless, strong sequence modeling alone does not guarantee robust decision-making under uncertainty. In partially observable tasks, the way uncertainty is represented and propagated through the history encoder can critically affect the agent’s behavior \citep{samsami2024uncertainty}.

This work connects these threads by studying adversarial perturbations in probabilistic inference for SSMs. We analyze attacks that operate under likelihood constraints, meaning that perturbations are constructed to remain consistent with the agent's observation model. Such perturbations can be difficult to detect because they appear statistically plausible to the typical internal filter or history encoder, allowing deception to propagate through latent-state inference rather than standing out as anomalous noise \citep{fang2019stealthy}. This issue is especially relevant in applications such as tokamak plasma control \citep{degrave2022magnetic}, autonomous drone racing \citep{kaufmann2023champion}, artificial pancreas systems \citep{cameron2011closed,forlenza2018predictive}, offline industrial RL \citep{deng2023offline}, and planetary powered landing \citep{gaudet2020deep}, where inaccurate inference can severely affect downstream decision-making. Beyond control, this perspective extends to related latent-variable inference problems in time-series modeling, including stochastic volatility models in finance, where distinguishing signal from noise is central to forecasting time-varying risk \citep{shephard2005sv}. Motivated by these settings, our goal is threefold: (i) to characterize how likelihood-constrained adversarial shifts can exploit hidden-state inference; (ii) to study their consequences within RL settings; and (iii) to use this understanding to to improve robustness under both aleatoric uncertainty and strategically crafted realistic perturbations. Our contributions are summarized as follows:
\begin{itemize}[itemsep=1pt, topsep=1pt, parsep=0pt, partopsep=0pt, leftmargin=*]
  \item We define likelihood-constrained adversarial attacks in probabilistic SSMs, moving beyond standard AML settings where perturbation constraints are centered on the observed input itself.
  \item We analyze how these perturbations exploit latent-state inference under different attack objectives and modeling assumptions.
  \item We study the resulting RL formulation and connect it to simpler probabilistic inference cases.
  \item We propose an online Bayesian methodology to protect against adversarial perturbations and disruptive observation noise.
\end{itemize}
Lastly, detailed proofs can be found in \autoref{app:affine_representations} and code used to reproduce the experiments is available in  \href{https://github.com/MiguelSantPasc/Code_AdvSSM}{link}.

\section{Related work}

\textbf{Adversarial machine learning and RL} AML studies how intelligent attackers can induce failures in learning systems by manipulating inputs, data, or interaction channels, and how to build models that are robust against such manipulation. A central theme is that actual deployments can deviate from the i.i.d. assumptions that underlie standard ML training and evaluation pipelines, whether due to aleatoric or targeted distribution shift \citep{quionero2009datasetshift,barreno2006secure}. RL broadens this challenge because the learning process is sequential and interactive: perturbations can affect future data collection, so small corruptions accumulate into large behavioral failures over time \citep{pinto2017robust,huang2017policyattack,behzadan2017policyinduction,gleave2019adversarialpolicies,kiourti2020trojdrl,rathbun2025adversarial}. Attacks in RL are commonly studied across several threat models, including perturbations of the agent's observations, manipulation of the environment's reward signal, and attacks that influence or exploit the agent's policy. Observation-level attacks are especially relevant because they go beyond static input corruption: they may involve spoofing or occluding observations, poisoning or backdooring the training process through state manipulation, or using strategic perturbations that exploit exploration to drive the agent toward unsafe regions of the state space \citep{behzadan2017policyinduction,kiourti2020trojdrl,rathbun2025adversarial}. Interaction-level threats, such as adversarial policies in multi-agent or competitive settings, further show that failures can be induced through the dynamics of interaction rather than direct input perturbation alone \citep{gleave2019adversarialpolicies}. Beyond norm-bounded perturbations, distributional robustness offers a complementary perspective for handling model mismatch, for example through Wasserstein-robust filtering formulations \citep{shafieezadehabadeh2018wasserstein}. Closely related to this perspective are stealthy attacks in estimation and control, which remain difficult to detect because they overcome the statistical tests or innovation properties used by filters \citep{fang2019stealthy}.

\textbf{Structured SSMs as efficient long-context sequence models in RL.}
Structured SSMs have become prominent as sequence models that can capture long-term dependencies with favorable computational properties as surveyed in \cite{somvanshi2024s4mamba,vaswani2017transformer}. In RL, SSM-based encoders are increasingly explored as history models for partially observable tasks, including settings that require long temporal context \citep{lu2023structured}. A key question for robust decision-making is how these architectures represent uncertainty, especially when the agent must balance noisy observations against internal belief \citep{samsami2024uncertainty}. This latent dynamics are central to model-based RL. The Dreamer framework \citep{hafner2019dreamer} introduced latent imagination for learning behaviors by fitting a latent dynamics model and optimizing policies using imagined roll outs. This line of work highlights the importance of accurate latent-state inference for control from high-dimensional observations. Complementary research investigates explicit uncertainty representations in state-space layers and their impact on decision-making under partial observability \citep{samsami2024uncertainty}, a setting naturally formulated as a partially observable Markov decision process (POMDP) \citep{kaelbling1998pomdp}.

\textbf{Robustness in SSMs.}
Robust inference in state-space models has been studied mainly through Kalman-filter variants that mitigate misspecified noise, heavy-tailed disturbances, outliers, or corrupted observations. Some approaches adapt process and observation noise online using variational Bayesian approximations \citep{sarkka2009recursive,sarkka2013variational}, while others use heavy-tailed models or robust measurement updates to down-weight anomalous observations \citep{roth2017robust,wang2018robust,li2020robust}. Recent work connects these ideas with generalised Bayesian filtering and online learning in non-stationary environments \citep{duranmartin2024outlier,duran2024unifying}. Additionally, a complementary line studies distributionally robust Kalman filtering, including Wasserstein ambiguity sets, and secure state estimation under sensor attacks \citep{shafieezadehabadeh2018wasserstein,kargin2024distributionally,shoukry2017secure}. These works mainly frame perturbations as outliers, model mismatch, distributional uncertainty, or corrupted sensors, rather than focusing on statistically plausible perturbations that are strategically constructed to manipulate latent-state inference and downstream decisions.

\section{Background}

This section provides background and introduces notation used throughout the paper.
We use bold uppercase letters ($\mathbf{A}$) to denote matrices and calligraphic letters ($\mathcal{X}$) denote sets.
For a square matrix $\mathbf{A}$, $\mathrm{diag}(\mathbf{A})$ denotes the vector of its diagonal entries.
We write $\mathcal{P}(\mathcal{X})$ for the set of probability distributions over $\mathcal{X}$.

\subsection{State Space Models}

We begin with the general formulation of SSM, which refers to a latent Markovian state process and an observation process. Let $s_t \in \mathbb{R}^{d_s}$ denote a latent state and $o_t \in \mathbb{R}^{d_o}$ an observation. Given an action or control input $a_t \in \mathbb{R}^{d_a}$, a SSM is defined by a transition model \eqref{eq:ssm-transition} and an observation model \eqref{eq:ssm-observation} as illustrated in \autoref{fig:ssm_rl_controlled}
\begin{align}
s_t &\sim p_\theta(s_t \mid s_{t-1}, a_{t-1}), \label{eq:ssm-transition}\\
o_t &\sim p_\theta(o_t \mid s_t, a_{t-1}). \label{eq:ssm-observation}
\end{align}

\begin{figure}[t]
\centering
\begin{tikzpicture}[
    >=stealth,
    node distance=2.0cm and 2.9cm,
    latent/.style={circle,draw,minimum size=28pt,inner sep=0pt},
    obs/.style={circle,draw,fill=gray!20,minimum size=28pt,inner sep=0pt},
    act/.style={rectangle,draw,rounded corners=3pt,minimum height=28pt,minimum width=28pt,inner sep=2pt},
    lbl/.style={font=\tiny, fill=white, inner sep=1pt},
    dots/.style={font=\large, fill=white, inner sep=1pt}
]

\def\edgegap{0.15cm}

\node[latent] (s0) {$s_0$};
\node[dots,   right=\edgegap of s0]     (sdotsL) {$\ldots$};
\node[latent, right=\edgegap of sdotsL] (s_prev) {$s_{t-1}$};

\node[latent, right=2.25cm of s_prev] (s) {$s_t$};
\node[latent, right=2.25cm of s]      (s_next) {$s_{t+1}$};

\node[dots,   right=\edgegap of s_next] (sdotsR) {$\ldots$};
\node[latent, right=\edgegap of sdotsR] (sT) {$s_T$};
\node[dots, below=0.85cm of sdotsL] (odotsL) {$\ldots$};
\node[obs,  below=0.85cm of s_prev] (o_prev) {$o_{t-1}$};
\node[obs,  below=0.85cm of s]      (o)      {$o_t$};
\node[obs,  below=0.85cm of s_next] (o_next) {$o_{t+1}$};
\node[dots, below=0.85cm of sdotsR] (odotsR) {$\ldots$};
\node[obs,  below=0.85cm of sT]     (oT)     {$o_T$};

\node[dots, above=1.0cm of sdotsL] (adotsL) {$\ldots$};
\node[act,  above=1.0cm of s_prev] (a_prev) {$a_{t-1}$};
\node[act,  above=1.0cm of s]      (a)      {$a_t$};
\node[dots, above=1.0cm of sdotsR] (adotsR) {$\ldots$};

\draw[->, dashed] (s0) -- (s_prev); 

\draw[->] (s_prev) -- (s)
    node[lbl, above, midway] {$p(s_t\mid s_{t-1},a_{t-1})$};

\draw[->] (s) -- (s_next)
    node[lbl, above, midway] {$p(s_{t+1}\mid s_t,a_t)$};

\draw[->, dashed] (s_next) -- (sT); 

\draw[->] (s_prev) -- (o_prev)
    node[lbl, right, midway] {$p(o_{t-1}\mid s_{t-1})$};

\draw[->] (s) -- (o)
    node[lbl, right, midway] {$p(o_t\mid s_t)$};

\draw[->] (s_next) -- (o_next)
    node[lbl, right, midway] {$p(o_{t+1}\mid s_{t+1})$};

\draw[->] (sT) -- (oT)
    node[lbl, right, midway] {$p(o_T\mid s_T)$};

\draw[->] (a_prev) -- (s);
\draw[->] (a) -- (s_next);

\end{tikzpicture}

\caption{SSM for RL. Latent states $s_t$ evolve according to a probabilistic transition, where the previous action $a_{t-1}$ influences next state. Observations $o_t$ are emitted from latent states. The chain structure continues to earlier and later time steps, as indicated by the transition arrows.}
\label{fig:ssm_rl_controlled}
\end{figure}

\subsection{Dynamic Linear Models and Kalman Filtering}
\label{subsec:dlm_kalman}

A widely used special case of SSM is the \emph{linear Gaussian} SSM, also known as a
Dynamic Linear Model (DLM) \citep{west1997bayesian,petris2009dynamic}.
Let $s_t\in\mathbb{R}^{d_s}$ denote the latent state, $o_t\in\mathbb{R}^{d_o}$ the observation, and
$a_{t-1}\in\mathbb{R}^{d_a}$ a control input applied between $t-1$ and $t$.
The initial state is assigned a Gaussian prior, \(s_0 \sim \mathcal{N}(m_0,P_0),
\) and, for $t\geq 1$, we consider the model:
\begin{align}
s_t &= \mathbf{A}_t s_{t-1} + \mathbf{B}_t a_{t-1} + \omega_t,
\qquad \omega_t \sim \mathcal{N}(\mathbf{0},\mathbf{W}_t),
\label{eq:dlm_transition}\\
o_t &= \mathbf{F}_t s_t + \mathbf{G}_t a_{t-1} + \nu_t,
\qquad \nu_t \sim \mathcal{N}(\mathbf{0},\mathbf{V}_t).
\label{eq:dlm_observation}
\end{align}
In many RL formulations the observation is assumed to depend on the current state only. In that case one sets
$\mathbf{G}_t=\mathbf{0}$ and actions affect observations only through the state dynamics.\footnote{Some references denote the exogenous control input by $u_t$. We adopt the RL convention and write this input as $a_{t-1}$ to emphasize that it corresponds to the action applied between $t-1$ and $t$.}

\textbf{Kalman filtering.}
For \eqref{eq:dlm_transition}--\eqref{eq:dlm_observation}, the involved distributions are Gaussian. Before observing $o_t$, the model propagates the previous posterior through the transition dynamics, giving the predictive belief
\begin{equation}
p(s_t \mid o_{1:t-1},a_{1:t-1})=\mathcal{N}(m_{t|t-1},P_{t|t-1}).
\label{eq:kf_predictive_belief}
\end{equation}
where given $(m_{t-1},P_{t-1})$, the one-step prediction is evolves according to
\begin{equation}
m_{t|t-1} = \mathbf{A}_t m_{t-1} + \mathbf{B}_t a_{t-1},
\qquad
P_{t|t-1} = \mathbf{A}_t P_{t-1}\mathbf{A}_t^\top + \mathbf{W}_t.
\label{eq:kf_prediction}
\end{equation}
This predictive belief induces the one-step predictive distribution of the observation,
\begin{equation}
p(o_t \mid o_{1:t-1},a_{1:t-1})=\mathcal{N}(\hat{o}_t,S_t),
\end{equation}
with
\begin{equation}
\hat{o}_t = \mathbf{F}_t m_{t|t-1} + \mathbf{G}_t a_{t-1},
\qquad
S_t = \mathbf{F}_t P_{t|t-1}\mathbf{F}_t^\top + \mathbf{V}_t.
\label{eq:kf_pred_obs}
\end{equation}
After observing $o_t$, Bayes' rule updates the predictive belief using the likelihood $p(o_t\mid s_t,a_{t-1})$, yielding
\begin{equation}
p(s_t \mid o_{1:t},a_{1:t-1})=\mathcal{N}(m_t,P_t).
\label{eq:kf_belief}
\end{equation}
With innovation $\epsilon_t=o_t-\hat{o}_t$, the update is
\begin{equation}
K_t = P_{t|t-1}\mathbf{F}_t^\top S_t^{-1},
\qquad
m_t = m_{t|t-1} + K_t\,\epsilon_t,
\qquad
P_t = (\mathbf{I}-K_t\mathbf{F}_t)\,P_{t|t-1}.
\label{eq:kf_update}
\end{equation}
Thus, the posterior mean corrects the prediction $m_t$ using the innovation $\epsilon_t=o_t-\hat{o}_t$, with the Kalman gain $K_t$ determining the strength of this correction. Consequently, the estimate relies more on the prediction when observations are noisy and more on the observation when predictive uncertainty is high or observation noise is low.

\textbf{Kalman smoothing.}
When the full sequence $o_{1:T}$ is available, smoothing computes the retrospective posteriors
$p(s_t \mid o_{1:T},a_{1:T-1})$ by combining the forward filter with a backward recursion
\citep{west1997bayesian}.
A standard choice is the \cite{rauch1965maximum} (RTS) smoother.
Let $(m_t,P_t)$ be the filtered moments and $(m_{t+1|t},P_{t+1|t})$ the one-step predictions from
\eqref{eq:kf_prediction}. Define the smoothing gain and terminal condition as
\begin{equation}
J_t = P_t \mathbf{A}_{t+1}^\top P_{t+1|t}^{-1},
\qquad
(m_{T|T},P_{T|T})=(m_T,P_T).
\label{eq:rts_gain}
\end{equation}
For $t=T-1,\dots,1$, the RTS backward recursion is
\begin{equation}
m_{t|T} = m_t + J_t\big(m_{t+1|T}-m_{t+1|t}\big),
\qquad
P_{t|T} = P_t + J_t\big(P_{t+1|T}-P_{t+1|t}\big)J_t^\top.
\label{eq:rts_update}
\end{equation}
Similarly, the RTS smoother corrects the filtered belief using future information. The discrepancy $m_{t+1|T}-m_{t+1|t}$ measures how the smoothed next-state estimate differs from its prediction, while $J_t$ controls how strongly this correction is propagated backward.

\subsection{Reinforcement learning under noisy or partial observability}

A natural way to embed state-space inference within a RL setting is through POMDPs, formally, defined as
\[
\mathcal{M} = (\mathcal{S}, \mathcal{A}, \mathcal{O}, P, O, r, \gamma, \rho_0),
\]
where $\mathcal{S}$ is the (latent) state space, $\mathcal{A}$ is the action space, and $\mathcal{O}$ is the observation space.
The transition kernel $P(\cdot \mid s,a) \in \mathcal{P}(\mathcal{S})$ specifies the environment dynamics, that is, \(s_{t+1} \sim P(\cdot \mid s_t, a_t),
\)
and the observation kernel $O(\cdot \mid s) \in \mathcal{P}(\mathcal{O})$ specifies how observations are generated from states, that is,
\(o_t \sim O(\cdot \mid s_t).
\)
The reward function $r:\mathcal{S}\times\mathcal{A}\to\mathbb{R}$ assigns immediate utility to taking action $a_t$ in state $s_t$, and $\gamma\in(0,1]$ is the discount factor that controls how future rewards are weighted.
The initial state distribution is $s_0 \sim \rho_0$.

Because the agent does not directly observe $s_t$, it selects actions using a history-dependent policy
\[
a_t \sim \pi(\cdot \mid h_t), \qquad h_t = (o_{1:t}, a_{1:t-1}),
\]
which can be summarized through the belief state $s_t\sim p(s_t\mid h_t)$. 

The RL objective is to find a policy $\pi$ that maximizes the expected discounted return
\begin{equation}
J(\pi)
=
\mathbb{E}_{\pi}\!\left[
\sum_{t=0}^{T-1}
\gamma^t r(s_t,a_t)
\right].
\end{equation}
where the expectation is taken over trajectories generated by $\rho_0$, $P$, $O$, and the policy $\pi$. In addition, we consider the \textit{action-value function} (or \textit{$Q$-function}) induced by $\pi$, which evaluates the expected future discounted return after taking action $a_t$ in state $s_t$
\begin{equation}
Q^\pi(s_t,a_t)
=
\mathbb{E}_{\pi}\!\left[
\sum_{k=t}^{T-1}
\gamma^{k-t} r(s_k,a_k)
\,\middle|\,
s_t,a_t
\right].
\end{equation}
The action-value function is closely related to the \textit{state-value function}, which measures the expected return obtained from state $s_t$ when following a policy $\pi$,
\begin{equation}
V^\pi(s_t)
=
\mathbb{E}_{a_t\sim\pi(\cdot\mid s_t)}
\left[
Q^\pi(s_t,a_t)
\right].
\end{equation}
Importantly, the Bellman relation gives
\begin{equation}
\label{eq:q_bellman_rl_attack}
Q^\pi(s_t,a_t)
=
\mathbb E_{s_{t+1}\sim p(\cdot\mid s_t,a_t)}
\left[
r(s_t,a_t,s_{t+1})
+
\gamma V^\pi(s_{t+1})
\right],
\end{equation}

\section{Problem Statement}
\label{sec:problemstatement}
We aim to design attacks against a system whose inference procedure relies on a KF. A \emph{white-box} setting is considered, in which the attacker is assumed to have full knowledge of the underlying state-space model, its parameters, and the inference procedure. The attack consists of modifying a single observation \(o_t\) to induce a controlled change in the inferred latent-state belief, particularly in its posterior mean \(\mu_t\). By altering the estimated state in this way, the attacker seeks to affect the expected value of a target quantity relevant to downstream decision-making.

\textbf{Attack setting and notation.}
Let $o_{1:T}$ be a clean observation sequence and let $a_{1:T-1}$ be the corresponding actions or controls. The attacker perturbs at a single time index $t\in\{1,\dots,T\}$ by replacing $o_t$ with an adversarial observation $o_t^{adv}\in\mathcal{O}$. We write $o_{-t}$ for the set of observations excluding $o_t$, so that the attacked history is $(o_{-t}, o_t^{adv})$. The attack is then propagated through the posterior distribution of the latent state at time $t$.\footnote{Online deployments are obtained as a special case where the attack is applied at the current time step, i.e., $t=T$. In that regime, the relevant posterior reduces to the filtering distribution $p(s_T \mid o_{1:T},a_{1:T-1})$, since no future observations are available and smoothing is not applicable. This is only a particualr case of the general approach.}

\textbf{Optimization objective.}
Let $g:\mathcal{S}\to\mathbb{R}^m$ be a target function of the latent state, and let $d(\cdot,\cdot)$ be a discrepancy measure. The attacker aims to change the posterior expectation of $g(s_t)$ by selecting $o_t^{adv}$
\begin{equation}
\max_{o_t^{adv}\in\mathcal{O}}
\quad
d\!\left(
\mathbb{E}_{s_t\sim p(\cdot \mid o_{1:T},a_{1:T-1})}[g(s_t)],
\,
\mathbb{E}_{s_t\sim p(\cdot \mid o_{-t},o_t^{adv},a_{1:T-1})}[g(s_t)]
\right).
\label{eq:attack-max-shift}
\end{equation}
In some applications, the goal is to steer the hidden state estimate toward a desired target value $M\in\mathbb{R}^m$ rather than simply maximizing change. This yields the targeted formulation
\begin{equation}
\min_{o_t^{adv}\in\mathcal{O}}
\quad
d\!\left(
\mathbb{E}_{s_t\sim p(\cdot \mid o_{-t},o_t^{adv},a_{1:T-1})}[g(s_t)],
M
\right).
\label{eq:attack-targeted}
\end{equation}
Throughout this work, $d$ is typically chosen as a squared Euclidean distance when $m>1$ or an absolute difference when $m=1$, although other choices are possible.

This formulation allows the same mechanism to express different attack objectives by choosing $g$ and $d$ as we now illustrate with several important examples.

\begin{itemize}

\item[A.] \textbf{Attacks on hidden state inference.}

These attacks aim to distort the agent's belief about the latent state by modifying a single observation $o_t$. They are previously detailed \autoref{sec:Methodology}.

\begin{itemize}

\item[(a1)] \textbf{Hidden state point attack.} Take $g(s_t)=s_t$ and $d(\mathbf{x},\mathbf{y})=\|\mathbf{x}-\mathbf{y}\|_2^2$.  
The attacker seeks to maximize the difference between the actual posterior mean and the posterior mean obtained after replacing $o_t$ with $o_t^{adv}$
\begin{equation}
\max_{o_t^{adv}\in\mathcal{O}}
\left\|
\mathbb{E}_{p(\cdot \mid o_{1:T},a_{1:T-1})}[s_t]
-
\mathbb{E}_{p(\cdot \mid o_{-t},o_t^{adv},a_{1:T-1})}[s_t]
\right\|_2^2 .
\label{eq:attack-estimate}
\end{equation}

\item[(a2)] \textbf{Target function point attack.} Take $g(s_t)\in \mathbb{R}$ and $d(\mathbf{x},\mathbf{y})=\|\mathbf{x}-\mathbf{y}\|_2^2$ with target $M\in\mathbb{R}$.  
The attacker attempts to steer the posterior mean of a task-relevant scalar functional toward a specific target value
\begin{equation}
\min_{o_t^{adv}\in\mathcal{O}}
\left(
\mathbb{E}_{p(\cdot \mid o_{-t},o_t^{adv},a_{1:T-1})}[g(s_t)]
- M
\right)^2 .
\label{eq:attack-g-function}
\end{equation}

This formulation is particularly relevant in safety-critical estimation settings where the decision depends on a scalar quantity derived from the latent state rather than on the state itself. 


\end{itemize}

\vspace{0.3cm}
\item[B.] \textbf{Reward-oriented attacks.}

In RL, two different scenarios may arise depending on whether the attack is performed on an \emph{offline dataset} or on a \emph{real-time agent}.

\begin{itemize}

\item[(b1)] \textbf{Offline trajectory attack (data poisoning).}  
If the attacker modifies an observation $o_t$ in a trajectory dataset used for training, the corresponding action $a_t$ is already fixed in the recorded data. In this case, the goal is to distort the expected reward associated with that state-action pair
\[
\min_{o_t^{adv}\in\mathcal{O}}
\;
\mathbb{E}_{s_t\sim p(s \mid o_{-t},o_t^{adv},a_{1:t-1})}
\big[r(s_t,a_t)\big].
\]
This scenario corresponds to manipulating the trajectories that will be used later by an offline RL algorithm to estimate value functions or policies. Note that this problem is equivalent to the one presented in (a2).

\item[(b2)] \textbf{Long-term reward attack}.  
In RL, the attacker does not aim to directly distort the latent state, but rather to manipulate the agent's \emph{perception} of the current state so that the action selected under the attacked belief performs poorly for the \emph{actual} state of the system. The agent then chooses an action according to \(a_T \sim \pi(\cdot \mid s_T^{adv})\), but this action is executed in the real environment, whose actual latent state remains $s_T^0$. Therefore, the attack objective is
\begin{equation}
\min_{o_T^{adv}\in\mathcal O}
\;
\mathbb{E}_{s_T^{adv}\sim p^{adv}(\cdot \mid o_{1:T-1},o_T^{adv},a_{1:T-1})}
\!\left[
\mathbb{E}_{a_T\sim \pi(\cdot \mid s_T^{adv})}
\big[
Q^\pi(s_T^0,a_T)
\big]
\right].
\label{eq:attack-RL}
\end{equation}
which objective captures what would be the actual purpose of the attack: modify the agent's belief about the current state so that the resulting action distribution leads to low reward when applied at the actual state $s_T^0$.


\end{itemize}

\end{itemize}

\textbf{Likelihood and plausibility constraints.}
To avoid trivially detectable attacks, we constrain $o_t^{\mathrm{adv}}$ to remain plausible under the agent's predictive model. This helps ensure that the adversarial observation remains undetected through anomaly-detection systems. This differs from standard adversarial-example settings, where perturbations are usually constrained directly around the observed input \citep{goodfellow2015explaining,madry2018towards}. In our setting, plausibility is expressed directly in probabilistic terms by requiring the predictive likelihood of the adversarial observation to remain above a threshold $\epsilon>0$. That is,
\begin{equation}
p\!\left(o_t^{\mathrm{adv}}\mid o_{-t},a_{-t}\right)
\ge
\epsilon.
\label{eq:likelihood-constraint}
\end{equation}
For numerical optimization, it is often convenient to rewrite \eqref{eq:likelihood-constraint} as a negative log-likelihood bound or, in the Gaussian case, as a Mahalanobis-distance constraint.

\section{Methodology}
\label{sec:Methodology}
In this section, we develop the attacks introduced in \autoref{sec:problemstatement}, deriving their different formulations and examining their main properties and implications. Detailed proofs can be found in \autoref{app:affine_representations}.

\subsection{\textbf{Adversarial attack on the state estimation}}
\label{subsec:lgssm_attack}
In this setting, the goal is to steer the agent's latent-state estimate, represented by the posterior mean, while keeping the perturbed observation plausible under the agent's predictive model. We replace a single measurement $o_t$ with $o_t^{\mathrm{adv}}$ while requiring the perturbation to remain statistically plausible. Concretely, the attack is of the form
\[
\max_{o_t^{\mathrm{adv}}\in\mathcal{O}} \ \mathcal{J}(o_t^{\mathrm{adv}})
\quad \text{s.t.}\quad
p(o_t^{\mathrm{adv}}\mid o_{-t},a_{1:T-1}) \ge \epsilon,
\]
where \(
\mathcal{J}(o_t^{\mathrm{adv}}) =
\left\|
\mathbb{E}_{p(\cdot \mid o_{1:T},a_{1:T-1})}[s_t]
-
\mathbb{E}_{p(\cdot \mid o_{-t},o_t^{\mathrm{adv}},a_{1:T-1})}[s_t]
\right\|_2^2\) as in \eqref{eq:attack-estimate}. 

The first step is to write $\mathcal{J}(o_t^{\mathrm{adv}})$ in closed form, so that the maximization problem can be solved explicitly in terms of $o_t$. To do so, consider the controlled DLM in \eqref{eq:dlm_transition} and \eqref{eq:dlm_observation}, with initial distribution $s_0\sim\mathcal{N}(m_0,P_0)$. Let $K_t$ denote the Kalman gain and define
\begin{equation}
\mathbf{M}_t := (\mathbf{I}-K_t\mathbf{F}_t)\mathbf{A}_t,
\qquad
\mathbf{U}_t := (\mathbf{I}-K_t\mathbf{F}_t)\mathbf{B}_t - K_t\mathbf{G}_t.
\label{eq:MU_def}
\end{equation}
Then the filtered mean recursion can be written in affine form as
\begin{equation}
m_t = \mathbf{M}_t\, m_{t-1} + \mathbf{U}_t\, a_{t-1} + K_t\, o_t.
\label{eq:filter_affine_recursion}
\end{equation}

Starting from the filtering recursion, we aim to express the hidden-state
estimate at a future time $t+k$ in terms of the previous estimate
$m_{t-1}$, the observations, and the actions. This yields an affine
representation that is particularly convenient for attack optimization.

\begin{lemma}[Affine representation of the KF mean]
\label{lem:past_unroll}
Fix $t\in\{1,\ldots,T\}$ and $k\in\{0,\ldots,T-t\}$. Under the controlled DLM, there exist matrices
$\boldsymbol{\Phi}_{t,k}$,
$\boldsymbol{\Gamma}^{(o)}_{t,k,i}$, and
$\boldsymbol{\Gamma}^{(a)}_{t,k,i}$ such that
\begin{equation}
m_{t+k}
=
\boldsymbol{\Phi}_{t,k}m_{t-1}
+
\sum_{i=0}^{k}
\boldsymbol{\Gamma}^{(o)}_{t,k,i}\,o_{t+i}
+
\sum_{i=0}^{k}
\boldsymbol{\Gamma}^{(a)}_{t,k,i}\,a_{t+i-1}.
\label{eq:filtered_affine_compact}
\end{equation}
Therefore, conditional on the past $m_{t-1}$ and the action sequence
$a_{t-1:t+k-1}$, the filtered mean $m_{t+k}$ is an affine function of the
observations $o_{t:t+k}$.
\lemmaend
\end{lemma}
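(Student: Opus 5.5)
The proof will be an induction on $k$, built on the one-step affine recursion \eqref{eq:filter_affine_recursion}. A key preliminary point is that in the linear Gaussian model the Kalman gains $K_s$ and covariances $P_s$, $P_{s|s-1}$, $S_s$ are computed from \eqref{eq:kf_prediction}--\eqref{eq:kf_update} using only $m_0$-independent quantities ($P_0,\mathbf{A}_s,\mathbf{F}_s,\mathbf{W}_s,\mathbf{V}_s$). They therefore do not depend on the observations or actions, and neither do $\mathbf{M}_s$ and $\mathbf{U}_s$ in \eqref{eq:MU_def}. This is what makes the final statement about affineness in $o_{t:t+k}$ legitimate: all coefficient matrices below are deterministic.

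For the base case $k=0$, \eqref{eq:filter_affine_recursion} at time $t$ gives directly
\[
m_t=\mathbf{M}_t m_{t-1}+K_t o_t+\mathbf{U}_t a_{t-1},
\]
so I set $\boldsymbol{\Phi}_{t,0}=\mathbf{M}_t$, $\boldsymbol{\Gamma}^{(o)}_{t,0,0}=K_t$ and $\boldsymbol{\Gamma}^{(a)}_{t,0,0}=\mathbf{U}_t$.

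For the inductive step, assume the representation holds for $k-1$ with $t+k\le T$. Applying \eqref{eq:filter_affine_recursion} at time $t+k$ and substituting the hypothesis for $m_{t+k-1}$ gives
\[
m_{t+k}=\mathbf{M}_{t+k}\Big(\boldsymbol{\Phi}_{t,k-1}m_{t-1}+\sum_{i=0}^{k-1}\boldsymbol{\Gamma}^{(o)}_{t,k-1,i}o_{t+i}+\sum_{i=0}^{k-1}\boldsymbol{\Gamma}^{(a)}_{t,k-1,i}a_{t+i-1}\Big)+\mathbf{U}_{t+k}a_{t+k-1}+K_{t+k}o_{t+k}.
\]
Reading off coefficients yields the recursions
\[
\boldsymbol{\Phi}_{t,k}=\mathbf{M}_{t+k}\boldsymbol{\Phi}_{t,k-1},
\]
\[
\boldsymbol{\Gamma}^{(o)}_{t,k,i}=\mathbf{M}_{t+k}\boldsymbol{\Gamma}^{(o)}_{t,k-1,i}, \qquad \boldsymbol{\Gamma}^{(a)}_{t,k,i}=\mathbf{M}_{t+k}\boldsymbol{\Gamma}^{(a)}_{t,k-1,i} \quad (i<k),
\]
\[
\boldsymbol{\Gamma}^{(o)}_{t,k,k}=K_{t+k}, \qquad \boldsymbol{\Gamma}^{(a)}_{t,k,k}=\mathbf{U}_{t+k}.
\]
Unrolling these recursions gives explicit products:
\[
\boldsymbol{\Phi}_{t,k}=\mathbf{M}_{t+k}\cdots\mathbf{M}_t,
\]
\[
\boldsymbol{\Gamma}^{(o)}_{t,k,i}=\mathbf{M}_{t+k}\cdots\mathbf{M}_{t+i+1}K_{t+i}, \qquad \boldsymbol{\Gamma}^{(a)}_{t,k,i}=\mathbf{M}_{t+k}\cdots\mathbf{M}_{t+i+1}\mathbf{U}_{t+i},
\]
with the empty product equal to $\mathbf{I}$. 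This completes \eqref{eq:filtered_affine_compact}.

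The algebra itself is routine. The only real obstacle is the justification above that the gains are independent of the observations. It has to be stated explicitly, since it is this independence that makes the map $o_{t:t+k}\mapsto m_{t+k}$ affine rather than merely given by a data-dependent formula. It also has to be checked against the action indexing $a_{t+i-1}$, so that $a_{t-1}$ is the action used at time $t$, consistent with \eqref{eq:kf_prediction}.
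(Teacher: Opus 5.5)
Your proposal is correct and follows essentially the same route as the paper: an induction on $k$ built on the one-step recursion $m_t=\mathbf M_t m_{t-1}+\mathbf U_t a_{t-1}+K_t o_t$, giving the same ordered-product coefficients $\boldsymbol\Phi_{t,k}=\mathbf M_{t+k}\cdots\mathbf M_t$ and $\boldsymbol\Gamma^{(o)}_{t,k,i}=\mathbf M_{t+k}\cdots\mathbf M_{t+i+1}K_{t+i}$ (and analogously with $\mathbf U_{t+i}$ for actions). Your remark that the gains do not depend on the observations matches the paper's standing assumption at the start of the appendix. The only cosmetic difference is that you obtain the coefficient recursions first and then unroll them, whereas the paper verifies the closed-form products directly in the inductive step.
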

Smoothed estimates are also relevant for offline inference and robust
training, where state estimates may be readjust using future observations.
We therefore extend the representation in
Lemma~\ref{lem:past_unroll} to the RTS smoothed mean $m_{t|T}
=
\mathbb{E}
\left[
s_t
\mid
o_{1:T},a_{0:T-1},m_0
\right].$
\begin{lemma}[Affine representation of the RTS mean]
\label{lem:explicit_o_a_rewrite}
Fix $t\in\{1,\ldots,T\}$. Under the controlled DLM, there exist matrices
$\mathbf H^{(m)}_t$,
$\mathbf H^{(o)}_{t,u}$, and
$\mathbf H^{(a)}_{t,u}$ such that
\begin{equation}
m_{t|T}
=
\mathbf H^{(m)}_t m_{t-1}
+
\sum_{u=t}^{T}
\mathbf H^{(o)}_{t,u}\,o_u
+
\sum_{u=t-1}^{T-1}
\mathbf H^{(a)}_{t,u}\,a_u.
\label{eq:smoothed_affine_compact}
\end{equation}
Therefore, conditional on $m_{t-1}$ and the action sequence
$a_{t-1:T-1}$, the RTS smoothed mean $m_{t|T}$ is an affine function of the
observations $o_{t:T}$.
\lemmaend
\end{lemma}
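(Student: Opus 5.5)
The plan is a backward induction on the RTS recursion \eqref{eq:rts_update}, using Lemma~\ref{lem:past_unroll} to express every filtered quantity in affine form. The key structural fact is that the gains $K_u$, $J_u$ and the covariances $P_u$, $P_{u+1|u}$ do not depend on the observations or actions. They depend only on $(\mathbf{A}_u,\mathbf{F}_u,\mathbf{W}_u,\mathbf{V}_u,P_0)$ through the Riccati recursion. All matrices appearing below are therefore deterministic, and linearity is preserved at every step.

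\textbf{Step 1: affine form for each filtered and predicted mean.} For every $u\in\{t,\ldots,T\}$, Lemma~\ref{lem:past_unroll} with $k=u-t$ gives
\[
m_u=\boldsymbol{\Phi}_{t,u-t}m_{t-1}+\sum_{i=0}^{u-t}\boldsymbol{\Gamma}^{(o)}_{t,u-t,i}o_{t+i}+\sum_{i=0}^{u-t}\boldsymbol{\Gamma}^{(a)}_{t,u-t,i}a_{t+i-1}.
\]
By \eqref{eq:kf_prediction}, $m_{u+1|u}=\mathbf{A}_{u+1}m_u+\mathbf{B}_{u+1}a_u$. This is again affine in $m_{t-1}$, in $o_{t:u}$, and in $a_{t-1:u}$.

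\textbf{Step 2: backward induction.} The claim to prove, for $u=T,T-1,\ldots,t$, is that $m_{u|T}$ is an affine combination of $m_{t-1}$, $o_{t:T}$ and $a_{t-1:T-1}$ with deterministic matrix coefficients.
\begin{itemize}
\item \emph{Base case.} For $u=T$ we have $m_{T|T}=m_T$, which is covered by Step 1 with $k=T-t$.
\item \emph{Inductive step.} Assume the claim holds for $m_{u+1|T}$. Then
\[
m_{u|T}=m_u+J_u\big(m_{u+1|T}-m_{u+1|t}\big)
\]
is a sum of affine expressions with constant matrix multipliers $J_u$, so it is affine as well.
\end{itemize}
Collecting terms at $u=t$ defines the coefficient matrices $\mathbf H^{(m)}_t$, $\mathbf H^{(o)}_{t,u}$ and $\mathbf H^{(a)}_{t,u}$. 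As a by-product one obtains explicit recursions, for example
\[
\mathbf H^{(o)}_{t,u}=\boldsymbol{\Gamma}^{(o)}_{t,0,u-t}\ \text{(where defined)}+J_t\big(\text{coefficient of } o_u \text{ in } m_{t+1|T}-\text{that in } m_{t+1|t}\big),
\]
with $\mathbf H^{(o)}_{t,t}$ also including the direct term $\boldsymbol{\Gamma}^{(o)}_{t,0,0}=K_t$.

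\textbf{Main obstacle.} The delicate part is the bookkeeping, not the argument itself. The intermediate smoothed means $m_{u+1|T}$ must be expressed relative to the fixed anchor $m_{t-1}$, not relative to $m_u$. The way to handle this is to state the induction hypothesis as "affine in $(m_{t-1},o_{t:T},a_{t-1:T-1})$" for each $u\ge t$, using Lemma~\ref{lem:past_unroll} with the \emph{same} starting index $t$ throughout.

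Two further points need care:
\begin{itemize}
\item The index ranges must be checked. $o_u$ enters only for $u\ge t$, and $a_u$ enters only for $t-1\le u\le T-1$; this follows because $a_T$ never appears in \eqref{eq:kf_prediction} up to time $T$.
\item $J_u$ requires $P_{u+1|u}$ to be invertible, which is implicitly assumed in \eqref{eq:rts_gain}. I would note this as a standing assumption, for example $\mathbf{W}_{u}\succ0$.
\end{itemize}
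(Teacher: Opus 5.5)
Your argument is correct and rests on the same two ingredients as the paper's proof. The first is Lemma~\ref{lem:past_unroll} applied with the fixed anchor index $t$ to every filtered mean $m_u$. The second is that $K_u$ and $J_u$ do not depend on the observations, so each RTS step is linear.

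The difference is in how the argument is organised. You prove existence by backward induction, showing that linear forms in $(m_{t-1},o_{t:T},a_{t-1:T-1})$ are closed under the step $m_{u|T}=m_u+J_u(m_{u+1|T}-m_{u+1|u})$. The paper instead proceeds as follows:
\begin{enumerate}
\item It rewrites this step as $m_{u|T}=\mathbf Q_u m_u+J_u m_{u+1|T}-J_u\mathbf B_{u+1}a_u$, with $\mathbf Q_u=\mathbf I-J_u\mathbf A_{u+1}$.
\item It unrolls the recursion completely, obtaining $m_{t|T}=\sum_{r=t}^{T}(J_t\cdots J_{r-1})\mathbf Q_r m_r-\sum_{r=t}^{T-1}(J_t\cdots J_{r-1})J_r\mathbf B_{r+1}a_r$.
\item It substitutes Lemma~\ref{lem:past_unroll} and exchanges the double sums, which yields closed forms for $\mathbf H^{(m)}_t$, $\mathbf H^{(o)}_{t,u}$ and $\mathbf H^{(a)}_{t,v}$.
\end{enumerate}

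Your route is shorter and fully suffices for what the lemma asserts. The paper's route delivers the explicit matrices that Algorithm~\ref{alg:single_point_attack} uses to build $\mathbf R_t$. Your by-product recursion recovers these matrices anyway. Let $C_u$ be the coefficient of $o_t$ in $m_{u|T}$ and $D_u=\mathbf M_u\cdots\mathbf M_{t+1}K_t$ the coefficient of $o_t$ in $m_u$. Then $C_u=D_u+J_u(C_{u+1}-\mathbf A_{u+1}D_u)=\mathbf Q_uD_u+J_uC_{u+1}$ with $C_T=D_T$, and this unrolls exactly to the paper's $\mathbf H^{(o)}_{t,t}$.

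Two small fixes are needed. In the inductive step, the predicted mean should be $m_{u+1|u}$, not $m_{u+1|t}$. Also, the ``where defined'' term is precisely $K_t$ at $u=t$ and zero otherwise, so $K_t$ must be counted only once in $\mathbf H^{(o)}_{t,t}$.

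Your standing assumption that $P_{u+1|u}$ is invertible is also implicit in the paper, through its definition of $J_t$.
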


The second step is to obtain a closed-form expression for the probability
distribution defining the feasible region. In a DLM, the leave-one-out predictive density of the omitted observation is gaussian, \(p(o_t\mid o_{-t},a_{0:T-1})
=
\mathcal N(\hat{o}_{-t},S_{-t}),
\)
so the likelihood constraint \eqref{eq:likelihood-constraint} defines an
explicit ellipsoidal region centered at the leave-one-out predictive mean
$\hat{o}_{-t}$. Following \citet{harrison_west_1991}, the required moments can be recovered directly from the original distribution.

\begin{lemma}[Leave-one-out predictive observation density]
\label{lem:loo_pred_obs_past_future}
Consider the controlled DLM
\eqref{eq:dlm_transition}--\eqref{eq:dlm_observation}, fix
$t\in\{1,\ldots,T\}$, and let
\(
o_{-t}
:=
\{o_1,\ldots,o_{t-1},o_{t+1},\ldots,o_T\}.
\) Then, given \( p(s_t\mid o_{1:T},a_{0:T-1})
=
\mathcal N(m_{t\mid T},P_{t\mid T}).\)
the leave-one-out distribution is,
\begin{align}
p(o_t\mid o_{-t},a_{0:T-1})
&=
\mathcal N(\hat{o}_{-t},S_{-t}),
\label{eq:loo_pred_obs_dist}
\\
S_{-t}
&=
\left[
\mathbf V_t^{-1}
-
\mathbf V_t^{-1}\mathbf F_t
P_{t\mid T}
\mathbf F_t^\top\mathbf V_t^{-1}
\right]^{-1},
\label{eq:loo_pred_obs_cov}
\\
\hat{o}_{-t}
&=
o_t
-
S_{-t}\mathbf V_t^{-1}
\left(
o_t
-
\mathbf F_t m_{t\mid T}
-
\mathbf G_t a_{t-1}
\right).
\label{eq:loo_pred_obs_mean}
\end{align}
\lemmaend
\end{lemma}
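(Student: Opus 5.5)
The plan is to use Bayes' rule with respect to $o_t$, exploiting the conditional independence structure of the DLM. Conditional on $s_t$ and $a_{t-1}$, the observation $o_t$ is independent of $o_{-t}$ and of all other actions. Hence
\[
p(s_t\mid o_{1:T},a_{0:T-1}) \propto p(o_t\mid s_t,a_{t-1})\,p(s_t\mid o_{-t},a_{0:T-1}).
\]
Write the leave-one-out state posterior as $p(s_t\mid o_{-t},a_{0:T-1})=\mathcal N(m_{-t},P_{-t})$. It is Gaussian because everything is jointly Gaussian.

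\textbf{Step 1: express the leave-one-out moments through the smoothed ones.} The Gaussian product above is an information-form update. Reading it in reverse gives
\[
P_{t\mid T}^{-1}=P_{-t}^{-1}+\mathbf F_t^\top\mathbf V_t^{-1}\mathbf F_t,
\qquad
P_{t\mid T}^{-1}m_{t\mid T}=P_{-t}^{-1}m_{-t}+\mathbf F_t^\top\mathbf V_t^{-1}(o_t-\mathbf G_t a_{t-1}).
\]
Therefore $P_{-t}^{-1}=P_{t\mid T}^{-1}-\mathbf F_t^\top\mathbf V_t^{-1}\mathbf F_t$ and $m_{-t}=P_{-t}\bigl(P_{t\mid T}^{-1}m_{t\mid T}-\mathbf F_t^\top\mathbf V_t^{-1}(o_t-\mathbf G_ta_{t-1})\bigr)$.

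\textbf{Step 2: form the leave-one-out predictive.} The observation is $o_t=\mathbf F_ts_t+\mathbf G_ta_{t-1}+\nu_t$ with $\nu_t$ independent of $s_t$ given $o_{-t}$. So $p(o_t\mid o_{-t},a)$ is Gaussian with mean $\hat o_{-t}=\mathbf F_tm_{-t}+\mathbf G_ta_{t-1}$ and covariance $S_{-t}=\mathbf F_tP_{-t}\mathbf F_t^\top+\mathbf V_t$.

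\textbf{Step 3: match the stated covariance.} Apply the Woodbury identity to $\bigl[\mathbf V_t^{-1}-\mathbf V_t^{-1}\mathbf F_t P_{t\mid T}\mathbf F_t^\top\mathbf V_t^{-1}\bigr]^{-1}$. This gives
\[
\mathbf V_t+\mathbf F_t\bigl(P_{t\mid T}^{-1}-\mathbf F_t^\top\mathbf V_t^{-1}\mathbf F_t\bigr)^{-1}\mathbf F_t^\top=\mathbf V_t+\mathbf F_tP_{-t}\mathbf F_t^\top .
\]
This is exactly $S_{-t}$, which establishes \eqref{eq:loo_pred_obs_cov}.

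\textbf{Step 4: match the stated mean.} Rather than pushing $m_{-t}$ through directly, I would compute the posterior of $o_t$ viewed as a "prior" combined with the Gaussian likelihood in $s_t$. The cleanest device is to write the residual $r:=o_t-\mathbf F_tm_{t\mid T}-\mathbf G_ta_{t-1}$. From Step 1,
\[
m_{t\mid T}-m_{-t}=P_{t\mid T}\mathbf F_t^\top\mathbf V_t^{-1}\bigl(o_t-\mathbf G_ta_{t-1}-\mathbf F_tm_{-t}\bigr),
\]
which is the gain form of the Kalman update. Letting $e:=o_t-\hat o_{-t}$, this yields
\[
r=e-\mathbf F_tP_{t\mid T}\mathbf F_t^\top\mathbf V_t^{-1}e=\mathbf V_t\bigl[\mathbf V_t^{-1}-\mathbf V_t^{-1}\mathbf F_tP_{t\mid T}\mathbf F_t^\top\mathbf V_t^{-1}\bigr]e=\mathbf V_tS_{-t}^{-1}e .
\]
Hence $e=S_{-t}\mathbf V_t^{-1}r$, and $\hat o_{-t}=o_t-S_{-t}\mathbf V_t^{-1}r$, which is \eqref{eq:loo_pred_obs_mean}.

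\textbf{Main obstacle.} The main obstacle is invertibility. We need $P_{t\mid T}^{-1}-\mathbf F_t^\top\mathbf V_t^{-1}\mathbf F_t$ and the bracket in \eqref{eq:loo_pred_obs_cov} to be invertible. I would handle this by assuming $P_{0},\mathbf V_t,\mathbf W_t\succ0$, so that $P_{-t}\succ0$ and the Step 1 identity certifies positive definiteness of both. If $P_{-t}$ is merely semidefinite, I would argue via limits or a covariance-form Woodbury.
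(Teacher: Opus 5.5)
Your proof is correct and matches the route the paper indicates. The paper gives no appendix proof of this lemma; it only says, following Harrison and West (1991), that the leave-one-out moments are recovered from the smoothed posterior, which is exactly your step of undoing the information-form update for $o_t$ to obtain $(m_{-t},P_{-t})$. Your Woodbury check of $S_{-t}=\mathbf V_t+\mathbf F_tP_{-t}\mathbf F_t^\top$ and the residual identity $o_t-\mathbf F_tm_{t\mid T}-\mathbf G_ta_{t-1}=\mathbf V_tS_{-t}^{-1}(o_t-\hat o_{-t})$ supply the algebra the citation leaves implicit, and your positive-definiteness assumptions are what make the inverses in the statement well defined.
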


Recalling the attack objective, we seek an adversarial observation $o_t^{\mathrm{adv}}$ that maximally changes the inferred hidden-state estimate
\[
\max_{o_t^{\mathrm{adv}}\in\mathcal{O}}
\left\|
\mathbb{E}[s_t\mid o_{1:T},a_{1:T-1}]
-
\mathbb{E}[s_t\mid o_{-t},o_t^{\mathrm{adv}},a_{1:T-1}]
\right\|_2^2
=
\max_{o_t^{\mathrm{adv}}\in\mathcal{O}}
\left\| m_{t|T}-m_{t|T}^{\mathrm{adv}}\right\|_2^2 .
\]
We emphasize that even in this white-box setting, where the attacker has full knowledge of the model and access to the complete observation sequence, the latent state $s_t$ remains unobservable to both the attacker and the defender. Hence, the attack is necessarily defined in terms of the posterior estimate $m_{t|T}$ rather than the actual state itself. Consequently, when the estimation error at time $t$ is large, for instance because of high observation noise, an adversarial perturbation that moves the posterior mean away from its original value may inadvertently move it closer to the latent state. In this sense, the attack targets the agent's belief, not direct access to the underlying state.

Using the affine representation of the smoothed mean from Lemma~\ref{lem:explicit_o_a_rewrite} and the leave-one-out predictive density from Lemma~\ref{lem:loo_pred_obs_past_future}, the attack can be written explicitly through the following ellipsoid-constrained quadratic problem.


\begin{theorem}[State-estimate point attack]
\label{thm:naive_qcqp}
Assume the affine representation of the RTS mean given in
Lemma~\ref{lem:explicit_o_a_rewrite} and define \(
\mathbf R_t
:=
\left(\mathbf H^{(o)}_{t,t}\right)^\top
\mathbf H^{(o)}_{t,t}
\succeq \mathbf 0.\)
For a likelihood threshold satisfying
\[
0<\epsilon
\leq
(2\pi)^{-d_o/2}|S_{-t}|^{-1/2},
\]
the state-estimate point attack is equivalent to the ellipsoid-constrained
quadratic problem
\begin{equation}
\boxed{
\begin{aligned}
\max_{o_t^{\mathrm{adv}}\in\mathbb R^{d_o}}
\quad&
(o_t^{\mathrm{adv}}-o_t)^\top
\mathbf R_t
(o_t^{\mathrm{adv}}-o_t)
\\
\mathrm{s.t.}\quad&
(o_t^{\mathrm{adv}}-\hat o_{-t})^\top
S_{-t}^{-1}
(o_t^{\mathrm{adv}}-\hat o_{-t})
\leq
\rho_\epsilon,
\end{aligned}}
\label{eq:naive_qcqp}
\end{equation}
where
\begin{equation}
\rho_\epsilon
:=
-2\log\epsilon
-d_o\log(2\pi)
-\log|S_{-t}|.
\label{eq:rho_epsilon_definition}
\end{equation}
\lemmaend
\end{theorem}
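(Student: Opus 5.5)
The plan is to rewrite the objective and the feasible set separately in terms of $o_t^{\mathrm{adv}}$, and then observe that both reductions are exact equivalences, so no approximation enters.

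\textbf{Objective.} By Lemma~\ref{lem:explicit_o_a_rewrite}, $m_{t|T}$ is an affine function of $o_{t:T}$, with coefficients $\mathbf H^{(m)}_t$, $\mathbf H^{(o)}_{t,u}$, $\mathbf H^{(a)}_{t,u}$. These coefficients are built from the Kalman and RTS gains, which depend only on the model matrices and $P_0$, not on the observed data. Replacing $o_t$ by $o_t^{\mathrm{adv}}$ therefore leaves the following unchanged: $m_{t-1}$, which depends only on $o_{1:t-1}$; every other observation $o_u$ with $u\neq t$; the actions; and all coefficient matrices. Subtracting the two affine representations, every term cancels except the one at $u=t$, giving
\[
m_{t|T}-m_{t|T}^{\mathrm{adv}}=\mathbf H^{(o)}_{t,t}(o_t-o_t^{\mathrm{adv}}).
\]
Hence $\mathcal J(o_t^{\mathrm{adv}})=(o_t^{\mathrm{adv}}-o_t)^\top \mathbf R_t (o_t^{\mathrm{adv}}-o_t)$. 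Positive semidefiniteness of $\mathbf R_t$ follows immediately because it is a Gram matrix. The one point to be careful about, if the actions are policy-driven, is that they are treated as fixed in the recorded history. This matches the conditioning on $a_{1:T-1}$ in the objective.

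\textbf{Constraint.} By Lemma~\ref{lem:loo_pred_obs_past_future}, $p(o_t^{\mathrm{adv}}\mid o_{-t},a_{0:T-1})$ is the Gaussian density $\mathcal N(\hat o_{-t},S_{-t})$ evaluated at $o_t^{\mathrm{adv}}$. This density does not depend on $o_t^{\mathrm{adv}}$ beyond the evaluation point, since $\hat o_{-t}$ and $S_{-t}$ are leave-one-out quantities. Formula \eqref{eq:loo_pred_obs_mean} expresses $\hat o_{-t}$ through the clean $o_t$ and $m_{t|T}$, but these combine to a quantity that is a function of $o_{-t}$ alone. I would note this explicitly to justify that $\hat o_{-t}$ is fixed. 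Writing out the density, the condition $p\ge\epsilon$ becomes
\[
-\tfrac{d_o}{2}\log(2\pi)-\tfrac12\log|S_{-t}|-\tfrac12(o_t^{\mathrm{adv}}-\hat o_{-t})^\top S_{-t}^{-1}(o_t^{\mathrm{adv}}-\hat o_{-t})\ \ge\ \log\epsilon.
\]
Since $\log$ is strictly increasing, this is equivalent to the condition that the Mahalanobis form is at most $\rho_\epsilon$ as defined in \eqref{eq:rho_epsilon_definition}.

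\textbf{Role of the threshold assumption.} The bound $\epsilon\le(2\pi)^{-d_o/2}|S_{-t}|^{-1/2}$ is exactly $\rho_\epsilon\ge0$. This guarantees that the ellipsoid is nonempty, since it contains $\hat o_{-t}$, so the problem is well posed. It is also compact when $S_{-t}\succ0$, so the maximum of the continuous objective is attained. I would also verify $S_{-t}\succ0$ from its definition as the covariance of a nondegenerate Gaussian, given $\mathbf V_t\succ0$.

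Combining the two reductions, the two problems have identical objectives and identical feasible sets, and hence the same maximizers. I expect the main obstacle to be rigorously justifying the cancellation in the objective step and the fixedness of $\hat o_{-t}$: that is, showing that all gains and coefficient matrices, and the leave-one-out moments, are invariant under the perturbation of $o_t$. This follows from the data-independence of the covariance recursions in a linear Gaussian model.
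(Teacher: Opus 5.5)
Your proposal is correct and follows essentially the same route as the paper's proof. You subtract the two affine RTS representations to get $\mathbf H^{(o)}_{t,t}(o_t^{\mathrm{adv}}-o_t)$ and hence the $\mathbf R_t$-quadratic form, take logarithms of the Gaussian leave-one-out density to obtain the ellipsoid of radius $\rho_\epsilon$, and note that the threshold condition is exactly $\rho_\epsilon\ge 0$; your extra checks are correct refinements the paper leaves implicit, namely data-independence of the gains, that $\hat o_{-t}$ depends on $o_{-t}$ alone despite the form of \eqref{eq:loo_pred_obs_mean}, $S_{-t}\succ 0$, and attainment by compactness.
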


Problem \eqref{eq:naive_qcqp} is a quadratically constrained quadratic optimization problem of trust-region type. Its global optimum is characterized by the KKT conditions \citep{boyd2004convex,conn2000trust,nocedal2006numerical}. An approach to solve it is described as in \autoref{alg:single_point_attack}.

\begin{algorithm}[ht]
\caption{Likelihood-constrained single-observation attack on hidden state estimate}
\label{alg:single_point_attack}
\begin{algorithmic}[1]
\Statex \textbf{Required:} SSM parameters $\{\mathbf A_u,\mathbf B_u,\mathbf F_u,\mathbf G_u,\mathbf W_u,\mathbf V_u\}_{u=1}^T$ and initial belief $(m_0,P_0)$ (or fixed $s_0=m_0$).
\Statex \textbf{Input:} observations $o_{1:T}$, actions $a_{0:T-1}$, time step $t$, constraint level $\epsilon$ (or $\rho_\epsilon$).
\Statex \textbf{Output:} adversarial observation $o_t^{\mathrm{adv}}$.
\Statex \rule{\linewidth}{0.4pt}

\State Compute $\mathbf H^{(o)}_{t,t}$ and $\mathbf Q_t=(\mathbf H^{(o)}_{t,t})^\top\mathbf H^{(o)}_{t,t}$
from Lemma~\ref{lem:explicit_o_a_rewrite}.
\State Compute $(\hat o_{-t},S_{-t})$ and $\rho_\epsilon$ from Lemma~\ref{lem:loo_pred_obs_past_future}.
\State Solve \eqref{eq:naive_qcqp} via KKT conditions (generalized trust-region form) and return $o_t^{\mathrm{adv}}$.
\end{algorithmic}
\end{algorithm}

\subsection{\textbf{Adversarial attack on a target function estimate}}
\label{subsec:lgssm_attack_on_g}

Consider now the attack to the scalar-target case, where
\(
g:\mathbb R^{d_s}\to\mathbb R,
\) ans \(M\in\mathbb R.
\)
The attacker no longer seeks to directly perturb the smoothed state estimate itself, but rather the posterior expectation of a scalar function of the latent state. Using the same likelihood-based plausibility region as in the previous section ans the objective function \eqref{eq:attack_g_scalar}, the attack problem becomes
\begin{equation}
\boxed{
\begin{aligned}
\min_{o_t^{\mathrm{adv}}\in\mathbb R^{d_o}}
\ & \Big(
\mathbb E_{p(\cdot\mid o_{1:t-1},\,o_t^{\mathrm{adv}},\,o_{t+1:T},\,a_{1:T-1})}[g(s_t)]
-M
\Big)^2 \\
\text{s.t.}\ & (o_t^{\mathrm{adv}}-\hat o_{-t})^\top S_{-t}^{-1}(o_t^{\mathrm{adv}}-\hat o_{-t})
\le \rho_\epsilon .
\end{aligned}}
\label{eq:attack_g_scalar}
\end{equation}
Following Lemma~\ref{lem:explicit_o_a_rewrite}, the smoothed mean is affine in the attacked observation. Isolating the dependence on $o_t$, we can write
\begin{equation}
m_{t|T}(o_t^{\mathrm{adv}})
=
\bar m_{t|T}^{(-t)} + \mathbf H^{(o)}_{t,t}\, o_t^{\mathrm{adv}},
\label{eq:m_tT_affine_attack}
\end{equation}
where
\begin{equation}
\bar m_{t|T}^{(-t)}
=
\mathbf H^{(m)}_t m_{t-1}
+
\sum_{u=t+1}^{T}\mathbf H^{(o)}_{t,u}o_u
+
\sum_{u=t-1}^{T-1}\mathbf H^{(a)}_{t,u}a_u
\label{eq:m_bar_minus_t}
\end{equation}
collects all terms not depending on the attacked observation. Moreover, the smoothed posterior remains Gaussian
\begin{equation}
p(s_t\mid o_{1:t-1},o_t^{\mathrm{adv}},o_{t+1:T},a_{1:T-1})
=
\mathcal N\!\bigl(m_{t|T}(o_t^{\mathrm{adv}}),\,P_{t|T}\bigr).
\label{eq:smoothed_posterior_attacked}
\end{equation}
Importantly, for fixed model parameters, the covariance $P_{t|T}$ does not depend on the realized observation values, and, therefore, it remains constant throughout the optimization. Hence only the smoothed posterior mean changes with the attack.

Using \eqref{eq:smoothed_posterior_attacked}, define
\begin{equation}
\mu_g(o_t^{\mathrm{adv}})
:=
\mathbb E_{\mathcal N(m_{t|T}(o_t^{\mathrm{adv}}),P_{t|T})}[g(s_t)].
\label{eq:mu_g_scalar}
\end{equation}
Then the attack objective becomes simply
\begin{equation}
J(o_t^{\mathrm{adv}})
=
\bigl(\mu_g(o_t^{\mathrm{adv}})-M\bigr)^2.
\label{eq:J_scalar}
\end{equation}
Introducing the Cholesky decomposition,
\begin{equation}
s_t
=
m_{t|T}(o_t^{\mathrm{adv}})+L_t\xi,
\qquad
\xi\sim\mathcal N(0,\mathbf I),
\qquad
L_tL_t^\top=P_{t|T},
\label{eq:reparam_scalar}
\end{equation}
we obtain
\begin{equation}
\mu_g(o_t^{\mathrm{adv}})
=
\mathbb E_\xi\!\left[
g\!\left(
\bar m_{t|T}^{(-t)}+\mathbf H^{(o)}_{t,t}o_t^{\mathrm{adv}}+L_t\xi
\right)
\right].
\label{eq:mu_g_reparam_scalar}
\end{equation}
Assuming that $g$ is differentiable, with gradient $\nabla g(s_t)\in\mathbb R^{d_s}$, differentiation under the expectation of a Gaussian distribution yields
\begin{equation}
\nabla_{o_t^{\mathrm{adv}}}\mu_g(o_t^{\mathrm{adv}})
=
(\mathbf H^{(o)}_{t,t})^\top\,
\mathbb E\!\left[\nabla g(s_t)\right].
\label{eq:grad_mu_g_scalar}
\end{equation}
Therefore,
\begin{equation}
\nabla_{o_t^{\mathrm{adv}}}J(o_t^{\mathrm{adv}})
=
2\bigl(\mu_g(o_t^{\mathrm{adv}})-M\bigr)
(\mathbf H^{(o)}_{t,t})^\top
\mathbb E\!\left[\nabla g(s_t)\right].
\label{eq:grad_J_scalar}
\end{equation}

When an analytic gradient of $g$ is unavailable, $\nabla g$ can be replaced by a finite-difference approximation. In practice, both $\mu_g$ and the gradient term are estimated by Monte Carlo. Given samples $\xi^{(1)},\dots,\xi^{(N)}\sim\mathcal N(0,\mathbf I)$, define
\begin{equation}
s_t^{(i,k)}
=
\bar m_{t|T}^{(-t)}+\mathbf H^{(o)}_{t,t}o_t^{(k)}+L_t\xi^{(i)},
\qquad i=1,\dots,N.
\label{eq:mc_samples_scalar}
\end{equation}
Then
\begin{equation}
\widehat{\mu}_g^{(k)}
=
\frac1N\sum_{i=1}^N g(s_t^{(i,k)}),
\label{eq:mc_mu_scalar}
\end{equation}
and
\begin{equation}
\widehat{\nabla J}^{(k)}
=
2\bigl(\widehat{\mu}_g^{(k)}-M\bigr)
(\mathbf H^{(o)}_{t,t})^\top
\left(
\frac1N\sum_{i=1}^N \nabla g(s_t^{(i,k)})
\right).
\label{eq:mc_grad_scalar}
\end{equation}
The attacked observation is then updated through Projected Gradient Descent (PGD)  onto the same leave-one-out likelihood ellipsoid as in Lemma~\ref{lem:loo_pred_obs_past_future}
\begin{equation}
o_t^{(k+1)}
=
\Pi_{\mathcal X_\epsilon}\!\left(
o_t^{(k)}-\eta\,\widehat{\nabla J}^{(k)}
\right),
\qquad
\mathcal X_\epsilon
=
\left\{
o\in\mathbb R^{d_o}:
(o-\hat o_{-t})^\top S_{-t}^{-1}(o-\hat o_{-t})\le \rho_\epsilon
\right\}.
\label{eq:pgd_scalar}
\end{equation}
where $\Pi_{\mathcal X_\epsilon}$ denotes the Euclidean projection onto the feasible set $\mathcal X_\epsilon$, that is, it maps any point to the closest point in $\mathcal X_\epsilon$ in Euclidean distance. The approach is described in \autoref{alg:single_point_attack}.

\begin{algorithm}[ht]
\caption{Likelihood-constrained single-observation attack on a scalar smoothed target}
\label{alg:single_point_attack_on_g_scalar}
\begin{algorithmic}[1]
\Statex \textbf{Required:} SSM parameters $\{\mathbf A_u,\mathbf B_u,\mathbf F_u,\mathbf G_u,\mathbf W_u,\mathbf V_u\}_{u=1}^T$ and initial beliefs $(m_0,P_0)$.
\Statex \textbf{Input:} observations $o_{1:T}$, actions $a_{1:T-1}$, attack index $t$, constraint level $\epsilon$ (or radius $\rho_\epsilon$), scalar target $M\in\mathbb R$, scalar function $g:\mathbb R^{d_s}\to\mathbb R$, optional gradient $\nabla g$, step size $\eta$, number of iterations $K$, Monte Carlo size $N$.
\Statex \textbf{Output:} adversarial observation $o_t^{\mathrm{adv}}$.
\Statex \rule{\linewidth}{0.4pt}

\State Compute $(\hat o_{-t},S_{-t})$ and $\rho_\epsilon$ from Lemma~\ref{lem:loo_pred_obs_past_future}.
\State Compute $\mathbf H^{(o)}_{t,t}$ and the affine decomposition
$
m_{t|T}(o_t^{\mathrm{adv}})
=
\bar m_{t|T}^{(-t)}+\mathbf H^{(o)}_{t,t}o_t^{\mathrm{adv}}
$
from Lemma~\ref{lem:explicit_o_a_rewrite}.
\State Compute the smoothed covariance $P_{t|T}$ and a factor $L_t$ such that $L_tL_t^\top=P_{t|T}$.
\State Define the feasible set
$
\mathcal X_\epsilon=
\{o:(o-\hat o_{-t})^\top S_{-t}^{-1}(o-\hat o_{-t})\le \rho_\epsilon\}
$.
\State Initialize $o_t^{(0)}=\Pi_{\mathcal X_\epsilon}(o_t)$.
\State Sample common Gaussian noises $\xi^{(1)},\dots,\xi^{(N)}\sim\mathcal N(0,\mathbf I)$.
\For{$k=0,\dots,K-1$}
    \State For each $i=1,\dots,N$, form
    $
    s_t^{(i,k)}
    =
    \bar m_{t|T}^{(-t)}+\mathbf H^{(o)}_{t,t}o_t^{(k)}+L_t\xi^{(i)}
    $.
    \State Estimate
    $
    \widehat{\mu}_g^{(k)}
    =
    \frac1N\sum_{i=1}^N g(s_t^{(i,k)})
    $.
    \State Estimate
    $
    \widehat{v}^{(k)}
    =
    \frac1N\sum_{i=1}^N \nabla g(s_t^{(i,k)})
    $
    if analytic gradient is available; otherwise by finite-differences.
    \State Compute gradient estimate
    $
    \widehat{\nabla J}^{(k)}
    =
    2(\widehat{\mu}_g^{(k)}-M)(\mathbf H^{(o)}_{t,t})^\top \widehat{v}^{(k)}
    $.
    \State Take projected step
    $
    o_t^{(k+1)}
    =
    \Pi_{\mathcal X_\epsilon}\!\left(
    o_t^{(k)}-\eta\,\widehat{\nabla J}^{(k)}
    \right)
    $.
\EndFor
\State Return $o_t^{\mathrm{adv}}=o_t^{(K)}$.
\end{algorithmic}
\end{algorithm}

\subsection{\textbf{Adversarial attack in RL settings}}
\label{subsec:rl_attack}

We now specialize the attack construction to a RL setting.
The attacker is no longer interested in directly distorting the latent-state
estimate. Instead, its objective is to manipulate the agent's perception of the
current state so that the policy selects an action that is suboptimal for the
actual underlying state of the system. We preserve the plausibility constraint
introduced in \eqref{eq:attack_g_scalar}; only the attack objective changes.

Focus on the final decision time \(T\). Attacking the final observation is
particularly natural in this setting: the adversarial perturbation modifies the
agent's belief about the current state \(s_T\), which in turn changes the action
\(a_T\) selected by the policy. Importantly, the selected action is
executed in the actual latent state \(s_T^0\). Therefore, the attack should not merely induce an arbitrary state estimate, but
an action that performs poorly when evaluated at the actual state. This attack purpose is formalized in \eqref{eq:attack-RL}: modifying the agent's
perceived state so that the induced action is harmful for the actual state.
However, the attacker does not directly observe \(s_T^0\). It only has access to the posterior estimate $m_T$.

A computable approximation is obtained by replacing the unknown state \(s_T^0\). This gives the surrogate objective
\begin{equation}
\label{eq:approx_attack_objective_rl}
\widetilde{\mathcal J}_T
\left(
o_T^{\mathrm{adv}}
\right)
=
\mathbb E_{
s_T^{\mathrm{adv}}
\sim
q_T^{\mathrm{adv}}
(
\cdot;
o_T^{\mathrm{adv}}
)
}
\left[
\mathbb E_{
a_T
\sim
\pi(
\cdot
\mid
s_T^{\mathrm{adv}}
)
}
\left[
Q^\pi
\left(
m_T,
a_T
\right)
\right]
\right].
\end{equation}
For a fixed realization of the unknown state \(s_T^0\), define the resulting
approximation error as
\begin{equation}
\label{eq:true_state_substitution_error}
\mathcal E_T
\left(
o_T^{\mathrm{adv}};
s_T^0
\right)
=
\mathcal J_T
\left(
o_T^{\mathrm{adv}};
s_T^0
\right)
-
\widetilde{\mathcal J}_T
\left(
o_T^{\mathrm{adv}}
\right).
\end{equation}
Equivalently,
\begin{align}
\mathcal E_T
\left(
o_T^{\mathrm{adv}};
s_T^0
\right)
={}&
\mathbb E_{
s_T^{\mathrm{adv}}
\sim
q_T^{\mathrm{adv}}
(
\cdot;
o_T^{\mathrm{adv}}
)
}
\Bigg[
\mathbb E_{
a_T
\sim
\pi(
\cdot
\mid
s_T^{\mathrm{adv}}
)
}
\Big[
Q^\pi
\left(
s_T^0,
a_T
\right) -
Q^\pi
\left(
m_T,
a_T
\right)
\Big]
\Bigg].
\label{eq:true_state_substitution_error_expanded}
\end{align}

The ideal objective therefore admits the decomposition
\begin{equation}
\label{eq:true_objective_error_decomposition}
\mathcal J_T
\left(
o_T^{\mathrm{adv}};
s_T^0
\right)
=
\widetilde{\mathcal J}_T
\left(
o_T^{\mathrm{adv}}
\right)
+
\mathcal E_T
\left(
o_T^{\mathrm{adv}};
s_T^0
\right).
\end{equation}
which is not computable by the attacker as the correction
term still depends on the unknown state \(s_T^0\). Therefore, the purpose of the following theorem is to quantify the expected error,
\begin{equation}
\label{eq:expected_state_substitution_error}
\overline{\mathcal E}_T
\left(
o_T^{\mathrm{adv}}
\right)
=
\mathbb E_{
s_T^0
\sim
p(
\cdot
\mid
o_{1:T},
a_{1:T-1}
)
}
\left[
\mathcal E_T
\left(
o_T^{\mathrm{adv}};
s_T^0
\right)
\right].
\end{equation}

\begin{theorem}[Expected approximation error]
\label{thm:state_estimation_substitution_error}

Assume that \(Q^\pi(\cdot,a)\) is twice  differentiable with
respect to the state and that there exists \(M_{\max}<\infty\) such that
\begin{equation}
\label{eq:q_hessian_uniform_bound}
\left\|
\nabla_{ss}^{2}
Q^\pi(s,a)
\right\|_{\mathrm{op}}
\leq
M_{\max}
\end{equation}
for every relevant state \(s\) and action \(a\). Then, for every
adversarial observation \(o_T^{\mathrm{adv}}\),
\begin{equation}
\label{eq:total_rl_error_bound_expanded}
\left|
\overline{\mathcal E}_T
\left(
o_T^{\mathrm{adv}}
\right)
\right|
\leq
\frac{M_{\max}}{2}
\operatorname{tr}
\left[
\left(
\left(
\mathbf A_T
P_{T-1}
\mathbf A_T^\top
+
\mathbf W_T
\right)^{-1}
+
\mathbf F_T^\top
\mathbf V_T^{-1}
\mathbf F_T
\right)^{-1}
\right].
\end{equation}
\lemmaend
\end{theorem}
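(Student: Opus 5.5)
The plan is a second-order Taylor expansion of $Q^\pi(\cdot,a_T)$ around the posterior mean $m_T$, followed by averaging over $s_T^0 \sim p(\cdot\mid o_{1:T},a_{1:T-1}) = \mathcal N(m_T,P_T)$. Here $m_T$ and $P_T$ are the filtered moments at the final time, since at $t=T$ smoothing coincides with filtering.

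The first step is to exchange the order of integration in \eqref{eq:expected_state_substitution_error} using \eqref{eq:true_state_substitution_error_expanded}. The outer distribution $q_T^{\mathrm{adv}}$ and the policy $\pi(\cdot\mid s_T^{\mathrm{adv}})$ do not depend on $s_T^0$, because the attacker's belief is built from $o_T^{\mathrm{adv}}$ and not from the true state. By Fubini (the integrand is integrable under the quadratic growth implied by the Hessian bound together with Gaussian moments), this gives
\[
\overline{\mathcal E}_T(o_T^{\mathrm{adv}})=\mathbb E_{s_T^{\mathrm{adv}}}\Big[\mathbb E_{a_T\sim\pi(\cdot\mid s_T^{\mathrm{adv}})}\big[\Delta(a_T)\big]\Big],\qquad \Delta(a):=\mathbb E_{s\sim\mathcal N(m_T,P_T)}\big[Q^\pi(s,a)-Q^\pi(m_T,a)\big].
\]
It therefore suffices to show $|\Delta(a)|\le \tfrac{M_{\max}}{2}\operatorname{tr}(P_T)$ uniformly in $a$, since averaging a uniformly bounded quantity over any probability measure preserves the bound.

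The second step controls $\Delta(a)$. For fixed $a$, Taylor's theorem with integral (or Lagrange) remainder gives
\[
Q^\pi(s,a)=Q^\pi(m_T,a)+\nabla_sQ^\pi(m_T,a)^\top(s-m_T)+\tfrac12(s-m_T)^\top\nabla^2_{ss}Q^\pi(\tilde s,a)(s-m_T)
\]
for some $\tilde s$ on the segment between $s$ and $m_T$. Taking the expectation removes the linear term because $\mathbb E[s-m_T]=0$. The remainder is bounded in absolute value by $\tfrac{M_{\max}}{2}\|s-m_T\|_2^2$ via \eqref{eq:q_hessian_uniform_bound}, and $\mathbb E\|s-m_T\|_2^2=\operatorname{tr}(P_T)$. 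This yields $|\Delta(a)|\le\tfrac{M_{\max}}{2}\operatorname{tr}(P_T)$.

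The final step identifies $P_T$ with the matrix inside the trace. From \eqref{eq:kf_prediction}, $P_{T|T-1}=\mathbf A_TP_{T-1}\mathbf A_T^\top+\mathbf W_T$. From \eqref{eq:kf_update}, $P_T=(\mathbf I-K_T\mathbf F_T)P_{T|T-1}$ with $K_T=P_{T|T-1}\mathbf F_T^\top S_T^{-1}$, and by the Woodbury identity this equals the information form $\big(P_{T|T-1}^{-1}+\mathbf F_T^\top\mathbf V_T^{-1}\mathbf F_T\big)^{-1}$. That is exactly the expression in \eqref{eq:total_rl_error_bound_expanded}; the identity requires $P_{T|T-1}\succ0$, which holds if $\mathbf W_T\succ0$ and is implicit in the statement's use of the inverse.

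The main obstacle is not the algebra but the regularity bookkeeping. We must make sure the Hessian bound holds along every segment joining $m_T$ to points in the support of the Gaussian, so "every relevant state" should be read as all of $\mathbb R^{d_s}$. We also need integrability for the Fubini exchange, which follows from the quadratic growth bound. A secondary point is that $P_T$, like the other KF covariances, does not depend on the observed values, so the bound holds uniformly over $o_T^{\mathrm{adv}}$, as claimed.
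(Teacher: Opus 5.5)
Your proposal is correct and follows essentially the same route as the paper's proof. Both arguments use a second-order Taylor expansion of $Q^\pi(\cdot,a)$ about $m_T$, cancel the linear term because $s_T^0-m_T$ has zero posterior mean, and obtain the bound $\frac{M_{\max}}{2}\operatorname{tr}(P_T)$ uniformly in $a$ from the Hessian assumption. Both then use Fubini to pass through the expectations over $s_T^{\mathrm{adv}}$ and $a_T$, and finish with the information form $P_T=\bigl(P_{T|T-1}^{-1}+\mathbf F_T^\top\mathbf V_T^{-1}\mathbf F_T\bigr)^{-1}$ of the Kalman covariance update. The remaining differences are cosmetic: you do the Fubini exchange first, use a Lagrange rather than integral remainder, and note explicitly that $P_{T|T-1}\succ 0$ is needed for the information form.
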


Theorem~\ref{thm:state_estimation_substitution_error} shows that the expected
error introduced by replacing the unknown state \(s_T^0\) with its posterior
mean \(m_T\) is controlled by two quantities: the posterior state uncertainty,
measured by \(\operatorname{tr}(P_T)\), and the curvature of \(Q^\pi\) with
respect to the state. Therefore, evaluating the induced action at \(m_T\) is
well justified whenever the state estimate is sufficiently accurate or
\(Q^\pi(\cdot,a)\) is approximately linear over the constraint region.

In this way, the approximation improves as the posterior covariance \(P_T\)
decreases which depends both on the transition and the observation covariance. Conversely, the effectiveness of the attack also depends on the
observation noise \(\mathbf V_T\). Indeed, the Kalman update can be
written as
\begin{equation}
m_T=
\left(
\mathbf I-K_T\mathbf F_T
\right)
\mathbf A_Tm_{T-1}
+
K_To_T
+
\left[
\left(
\mathbf I-K_T\mathbf F_T
\right)
\mathbf B_T
-
K_T\mathbf G_T
\right]
a_{T-1}.
\label{eq:kalman_weighted_rl_attack}
\end{equation}
Thus, a larger Kalman gain assigns greater weight to the adversarial observation and less weight to the model prediction. This Kalman gain,
\begin{equation}
K_T
=
\left(
\mathbf A_TP_{T-1}\mathbf A_T^\top+\mathbf W_T
\right)
\mathbf F_T^\top
\left[
\mathbf F_T
\left(
\mathbf A_TP_{T-1}\mathbf A_T^\top+\mathbf W_T
\right)
\mathbf F_T^\top
+
\mathbf V_T
\right]^{-1},
\label{eq:kalman_gain_general_rl}
\end{equation} shows that a smaller observation covariance \(\mathbf V_T\),
or a larger transition covariance $\mathbf W_T$ increases the relative influence of the current observation on the posterior mean.

Consequently, for a fixed posterior covariance \(P_T\), and therefore for a
fixed expected approximation-error bound, the attack becomes more effective when the transition covariance \(\mathbf W_T\) is larger relative to the
observation covariance \(\mathbf V_T\). This corresponds to a regime in which
a larger proportion of the prior uncertainty is generated by the transition
model rather than by the observation model. In this case, the defender greater weight to the incoming observation, allowing an adversarial
perturbation to induce a larger displacement of the posterior mean without
increasing the approximation-error bound.

Therefore, the most favorable regime for the attacker is characterized by
small observation noise and large transition  uncertainty. A small
\(\mathbf V_T\) keeps the clean posterior covariance \(P_T\) small, making
\(m_T\) an accurate estimate of the actual state \(s_T^0\), while a large
predictive covariance, which may arise from a large transition covariance
\(\mathbf W_T\), increases the Kalman gain and gives the manipulated
observation sufficient influence to substantially alter the agent's estimated
state.

Motivated by this approximation, the RL adversarial attack is defined
by the followng constrained optimization problem,
\begin{equation}
\boxed{
\begin{aligned}
\min_{o_T^{\mathrm{adv}}\in\mathbb R^{d_o}}
\quad &
\mathbb E_{
s_T^{\mathrm{adv}}
\sim
p_T^{\mathrm{adv}}
(
\cdot|
o_T^{\mathrm{adv}})
}
\left[
\mathbb E_{
a_T
\sim
\pi
}
\left[
\mathbb E_{s_{t+1}\sim p(\cdot\mid m_T,a_T)}
\left[
r(m_T,a_T)
+
\gamma V^\pi(s_{T+1})
\right]
\right]
\right]
\\[1ex]
\text{s.t.}
\quad &
\left(
o_T^{\mathrm{adv}}
-
\hat o_{-T}
\right)^\top
S_{-T}^{-1}
\left(
o_T^{\mathrm{adv}}
-
\hat o_{-T}
\right)
\leq
\rho_\epsilon .
\end{aligned}
}
\label{eq:practical_rl_attack_objective}
\end{equation}

In terms of the notation introduced in
\autoref{subsec:attack_target_function_estimation}, the scalar RL function
\(g(s_T):=g_{RL}(s_T)\) is now itself a double expectation. Although this may appear
computationally expensive, its evaluation requires only a one-step-ahead
prediction rather than simulating a complete trajectory.
Consequently, both the function and its gradient can be estimated using a
small number of MC samples.

The evaluations of \(g_{\mathrm{RL}}\) can be obtained using the
procedure in \autoref{alg:rl_target_function_gradient} which is to be included within  Lines 8--20 of \autoref{alg:single_point_attack_on_g_scalar}.

\begin{algorithm}[ht]
\caption{Computation of \(g_{RL}(s_T)\) and \(\nabla g_{RL}(s_T)\)}
\label{alg:rl_target_function_gradient}
\begin{algorithmic}[1]

\Statex \textbf{Required:} differentiable policy \(a_T=\pi(\cdot |s)\), reward function \(r\); value function \(V^\pi\); discount factor \(\gamma\).

\Statex \textbf{Input:} perceived state \(s_T\), number of policy samples
\(N_a\), and number of transition samples \(N_p\).

\Statex \textbf{Output:} 
\(\widehat g_{\mathrm{RL}}(s)\) and
\(\widehat{\nabla_s g_{\mathrm{RL}}}(s)\) estimates.

\Statex \rule{\linewidth}{0.4pt}

\State Initialize \(\widehat g_{\mathrm{RL}}(s)\gets 0\) and
\(\widehat{\nabla_s g_{\mathrm{RL}}}(s)\gets \mathbf 0\).

\For{\(j=1,\ldots,N_a\)}
    \State Sample \(a_T^{(j)}\sim\pi\).

    \For{\(\ell=1,\ldots,N_p\)}
        \State Sample \(\omega_{T+1}^{(j,\ell)} \sim \mathcal N(\mathbf 0,\mathbf W_{T+1})\) and compute \( s_{T+1}^{(j,\ell)} = \mathbf A_{T+1}m_T + \mathbf B_{T+1}a_T^{(j)} + \omega_{T+1}^{(j,\ell)} \).

        \State Compute the one-step return
        \(R^{(j,\ell)}
        =r(m_T,a_T^{(j)})
        +\gamma V^\pi(s_{T+1}^{(j,\ell)})\).

        \State Update
        \(\widehat g_{\mathrm{RL}}(s)
        \gets
        \widehat g_{\mathrm{RL}}(s)+\frac{1}{N_aN_p}R^{(j,\ell)}\).

        \State Compute         \(
        \nabla_s R^{(j,\ell)}
        \) by automatic differentation

        \State Update
        \(
        \widehat{\nabla_s g_{\mathrm{RL}}}(s)
        \gets
        \widehat{\nabla_s g_{\mathrm{RL}}}(s)
        +
        \frac{1}{N_aN_p}
        \nabla_s R^{(j,\ell)}.
        \)
    \EndFor
\EndFor

\State Return
\(\widehat g_{\mathrm{RL}}(s)\) and
\(\widehat{\nabla_s g_{\mathrm{RL}}}(s)\).

\end{algorithmic}
\end{algorithm}

\FloatBarrier

\section{Online Bayesian defense: Directional Covariance Adaptation}
\label{sec:OnlineBayesianDefense}
This section proposes an online Bayesian mechanism to adapt the observation noise covariance in presence of adversarial perturbations or, more generally, along directions in which the observation noise may be inherently more disruptive. The perspective is consistent with generalized Bayesian online learning with auxiliary latent variables, as formalized in the BONE framework \citep{duran2024unifying}, also related to recent robust Kalman filtering methods that modify the measurement update \citep{duranmartin2024outlier}.

Let $\hat{o}_t$ denote the one-step-ahead predictive observation mean based on \eqref{eq:kf_pred_obs}, $o_t$ the actual observation, and $o_t^{\mathrm{adv}}$ be the adversarial observation generated according to the attack criterion described in \autoref{sec:problemstatement} and \autoref{sec:Methodology}. Define the corresponding normalized observation direction as
\begin{equation}
u_t
=
\frac{o_t-\hat{o}_t}{\|o_t-\hat{o}_t\|}.
\label{eq:adv_direction}
\end{equation}

To distinguish between nominal and contaminated measurements, we introduce a latent Bernoulli variable
\begin{equation}
z_t \sim \mathrm{Bernoulli}(\pi_t),
\label{eq:latent_bernoulli}
\end{equation}
where $z_t=1$ indicates adversarial contamination and $z_t=0$ corresponds to the nominal regime. The prior contamination probability $\pi_t$ should encode, before assimilating the current observation, how plausible it is that the next measurement may be adversarially corrupted. In the language of \citet{duran2024unifying}, this quantity plays the role of a predictive gating weight for the auxiliary contamination process. We construct this prior from two complementary experts.

The first is a \textit{state-risk expert}. It measures how dangerous or
vulnerable the predicted state is for the task under consideration. In our
setting, it is based on the attacked objective $g(s_t)$ that we seek to
protect. As an example, define
\begin{equation}
r_t^{(s)}
=
\mathbb{P}\!\left(
g(s_t)\geq c
\mid
o_{1:t-1}
\right),
\label{eq:risk_expert_s}
\end{equation}
where $c$ is a prescribed danger threshold. This term is large when the
predictive distribution of the state places substantial mass in a dangerous
region\footnote{Here, the dangerous region denotes a subset of the state
space in which a specific decision should be taken.}. If such a threshold
$c$ is not known, vulnerability may instead be quantified through the local
sensitivity of $g$ around the predictive state. The idea is that uncertainty
becomes more critical in regions where small perturbations induced by
observation noise can produce large changes in the protected objective. A
natural alternative is therefore to define a gradient-based expert of the
form
\begin{equation}
r_t^{(s)}
=
\sigma\!\left(
\alpha+\beta\left\|\nabla g(m_{t|t-1})\right\|
\right),
\label{eq:risk_expert_grad}
\end{equation}
where $\sigma(\cdot)$ is the logistic function and $\alpha,\beta$ are tuning
parameters. This quantity approaches $1$ when the objective is highly
sensitive to local state deviations, that is, when observation noise can have
a stronger effect on the quantity we wish to protect, even when no explicit
danger threshold is available.

The second one is a \textit{fixed-caution expert}, which assigns a constant
risk level determined only by the constraint level \(\epsilon\). Define
\begin{equation}
r_t^{(c)}
=
\epsilon,
\label{eq:fixed_caution_score}
\end{equation}
Therefore, this is a minimum level of risk that the defender wants to have, independently of the environment we are working.

We combine both prior experts through a weighted sum,
\begin{equation}
\pi_t
=
\omega_s r_t^{(s)}
+
\omega_c r_t^{(c)},
\qquad
\omega_s,\omega_c\geq0,
\qquad
\omega_s+\omega_c=1.
\label{eq:prior_pi_general_weights}
\end{equation}
This prior combines task-level vulnerability with a desired level of defense. The weights must be selected according to the desired behavior.
Assigning greater weight to the second component introduces a persistent baseline level of caution that does not depend on the $g$ objective. In contrast, assigning greater weight to the first one makes the prior more task-specific. Thus, $\omega_c$ controls a more conservative form of caution, whereas $\omega_s$ produces a more
decision-dependent assessment of risk.

Conditioned on \(z_t\), we modify the observation-noise model as
\begin{equation}
\nu_t\mid z_t
\sim
\mathcal N
\left(
\mathbf 0,\,
\mathbf V_t+\lambda z_tu_tu_t^\top
\right),
\label{eq:robust_obs_noise}
\end{equation}
which implies
\begin{equation}
o_t\mid s_t,a_{t-1},z_t
\sim
\mathcal N
\left(
\mathbf F_ts_t+\mathbf G_ta_{t-1},\,
\mathbf V_t+\lambda z_tu_tu_t^\top
\right).
\label{eq:robust_obs_model}
\end{equation}
Here, \(\lambda>0\) controls the magnitude of the covariance inflation.
Therefore, when \(z_t=1\), the observation uncertainty is increased only along
the direction \(u_t\), while directions orthogonal to \(u_t\) remain
unchanged. This selective covariance adaptation is related in spirit to
variational Bayesian noise-adaptation methods, which learn time-varying
observation covariances online
\citep{sarkka2009recursive,sarkka2013variational}, but here the correction is
restricted to a structured rank-one perturbation.

To update the prior into a posterior disturbance probability, we use two complementary
diagnostics of adversarial behavior and incorporate the evidence through a
generalized Bayesian update.

The first expert, \textit{distance-aware expert}, measures the magnitude of the observed innovation \(r_t
=
o_t-\hat o_t.
\) through its squared Mahalanobis distance, \(
d_t^2
=
r_t^\top
\mathbf S_t^{-1}
r_t,\)
and the corresponding evidence score
\begin{equation}
a_{\mathrm{md},t}
=
\frac{
d_t^2
}{
\epsilon_{\mathrm{md}}+d_t^2
},
\label{eq:mahalanobis_attack_evidence}
\end{equation}
where \(\epsilon_{\mathrm{md}}>0\) controls the sensitivity of the score.
Thus, \(a_{\mathrm{md},t}\) is close to zero for small innovations and
approaches one as the observation becomes increasingly anomalous under the
nominal predictive model.

The second one, \textit{function-aware expert}, measures how the inferred state degrades the objective. Define
\begin{equation}
a_{\mathrm{obj},t}
=
\operatorname{clip}
\left(
\frac{
J(o_t)-J(\hat{o_t})
}{
J(o_t^{\mathrm{adv}})-J(\hat{o_t})
},
0,
1
\right),
\label{eq:objective_gap_attack_evidence}
\end{equation}
where
\(\operatorname{clip}(x,0,1)
=
\min
\left\{
1,
\max\{0,x\}
\right\}.
\). Hence, \(a_{\mathrm{obj},t}=0\) indicates behaviour similar to the predictive
case, whereas \(a_{\mathrm{obj},t}=1\) indicates a degradation at least as
large as the one induced by the computed adversarial attack.

The two evidence scores are aggregated through a weighted linear combination,
\begin{equation}
a_t
=
w_{\mathrm{md}}a_{\mathrm{md},t}
+
w_{\mathrm{obj}}a_{\mathrm{obj},t},
\qquad
w_{\mathrm{md}}+w_{\mathrm{obj}}=1,
\label{eq:combined_attack_evidence}
\end{equation}
where again, giving more weight to $w_{\mathrm{obj}}
$ builds a system more worried in the decision taken based on $g$, while giving more weight to $w_{\mathrm{md}}$ builds a system which is more focused on identifying outliers independently on how they affect the decision.

The aggregated score is incorporated through a generalized Bayesian update.
For \(z_t\in\{0,1\}\), the binary logarithmic loss is 
\(\ell_t(z_t)
=
-z_t\log a_t
-
(1-z_t)\log(1-a_t).\)
and the associated generalized likelihood is
\begin{equation}
\widetilde L_t(z_t)
=
\exp
\left(
-\ell_t(z_t)
\right)
=
a_t^{z_t}
(1-a_t)^{1-z_t}.
\label{eq:aggregated_generalized_likelihood}
\end{equation}
Consequently, the generalized posterior probability is
\begin{equation}
\gamma_t
=
\widetilde{\mathbb P}
\left(
z_t=1
\mid
o_{1:t}
\right)
=
\frac{
\pi_t a_t
}{
\pi_t a_t
+
(1-\pi_t)(1-a_t)
}.
\label{eq:posterior_gamma}
\end{equation}

Finally, we use the generalized posterior distribution of \(z_t\) to average
the conditional observation covariance:
\begin{align}
\widetilde{\mathbf V}_t
=
\widetilde{\mathbb E}
\left[
\mathbf V_t
+
\lambda z_tu_tu_t^\top
\;\middle|\;o_t
\right]=
\mathbf V_t
+
\lambda
\widetilde{\mathbb E}
\left[
z_t
\mid o_t
\right]
u_tu_t^\top=
\mathbf V_t
+
\lambda\gamma_tu_tu_t^\top.
\label{eq:posterior_averaged_covariance}
\end{align}
therefore, the observation covariance becomes
\(\widetilde{\mathbf V}_t
=
\mathbf V_t
+
\lambda\bar\gamma_tu_tu_t^\top.
\)
Optionally, we threshold the generalized posterior probability according to
\(\bar\gamma_t
=
\gamma_t
\mathbf 1
\left\{
\gamma_t\geq\delta
\right\}.\)
If \(\gamma_t<\delta\), the available evidence is considered insufficient and
the covariance correction is switched off.

Therefore, the generalized posterior probability controls the
magnitude of the directional covariance inflation continuously. When
\(\gamma_t\) is close to zero, the nominal observation covariance is retained;
when \(\gamma_t\) is close to one, the full rank-one covariance inflation is
applied.

The effect of the covariance inflation can be quantified by comparing the
posterior updates with and without the defence. Since
\(
o_t-\hat o_t
=
\|o_t-\hat o_t\|u_t,
\)
the additional covariance is aligned with the observed innovation. Using the
Sherman--Morrison formula, the difference between the corresponding posterior
means is
\begin{equation}
m_{t|t}^{z_t=1}
-
m_{t|t}^{z_t=0}
=
-
\frac{\alpha_t-1}{\alpha_t}
\mathbf P_{t|t-1}\mathbf F_t^\top
\mathbf S_t^{-1}
\left(o_t-\hat o_t\right),
\label{eq:posterior_mean_difference}
\end{equation}
where
\(\alpha_t
=
1+
\lambda\bar\gamma_t
u_t^\top\mathbf S_t^{-1}u_t.
\) Since \(\alpha_t\geq 1\), the observation has less influence on the current hidden-state estimate whenever the defense is activated.

At the same time, the observation covariance is modified only through the
rank-one term
\(
\lambda\bar\gamma_tu_tu_t^\top.
\) Hence, additional uncertainty is introduced only along the suspicious
observation direction \(u_t\); directions orthogonal to \(u_t\) receive no
direct covariance inflation. Therefore, the defense reduces trust in the
component of the observation associated with the detected innovation while
preserving the information carried by the remaining observation directions.

The corresponding increase in posterior state uncertainty is
\begin{equation}
\mathbf P_{t|t}^{z_t=1}
-
\mathbf P_{t|t}^{z_t=0}
=
\frac{\lambda\bar\gamma_t}{\alpha_t}
v_tv_t^\top
\succeq 0,
\qquad \text{with} \qquad
v_t
=
\mathbf P_{t|t-1}\mathbf F_t^\top
\mathbf S_t^{-1}u_t,
\label{eq:posterior_covariance_difference}
\end{equation}
so the additional posterior uncertainty is also rank one and concentrated
along the state-space direction induced by \(u_t\). This produces the desired twofold effect. First, the current observation has a reduced influence on the hidden-state estimate along the suspicious direction, rather than being discarded entirely. Second, the resulting posterior covariance records that the state is less certain along the corresponding state-space direction. After propagation through the dynamics, this increased uncertainty is carried into the next predictive covariance. Consequently, if the subsequent observation \(o_{t+1}\) provides information along that direction, the Kalman filter places relatively more weight on this new measurement because the information inherited from the past is less reliable. At the same time, directions orthogonal to the inflated direction remain unchanged and retain the same level of confidence as under the nominal filter.

\FloatBarrier

\section{Experiments}
\label{sec:experiments}

The experiments will be presented in the same order as there were introduced in \autoref{sec:Methodology}. Each attack will be accompanied by the proposed covariance adaptation defense results. Moreover, to maintain a common interpretation across experiments, all examples can be viewed through an scenario in which a robot receiving information from the environment through different sensors, summarized in $o_t$. However, to encompass all possible cases, $o_t$ may have different interpretations and serve different purposes.

\subsection{\textbf{Adversarial attacks on hidden-state estimation} (\autoref{subsec:lgssm_attack}) }
\label{subsec:attack_state_estimation}

\begin{figure}[t]
    \centering
    \includegraphics[width=0.99\linewidth]{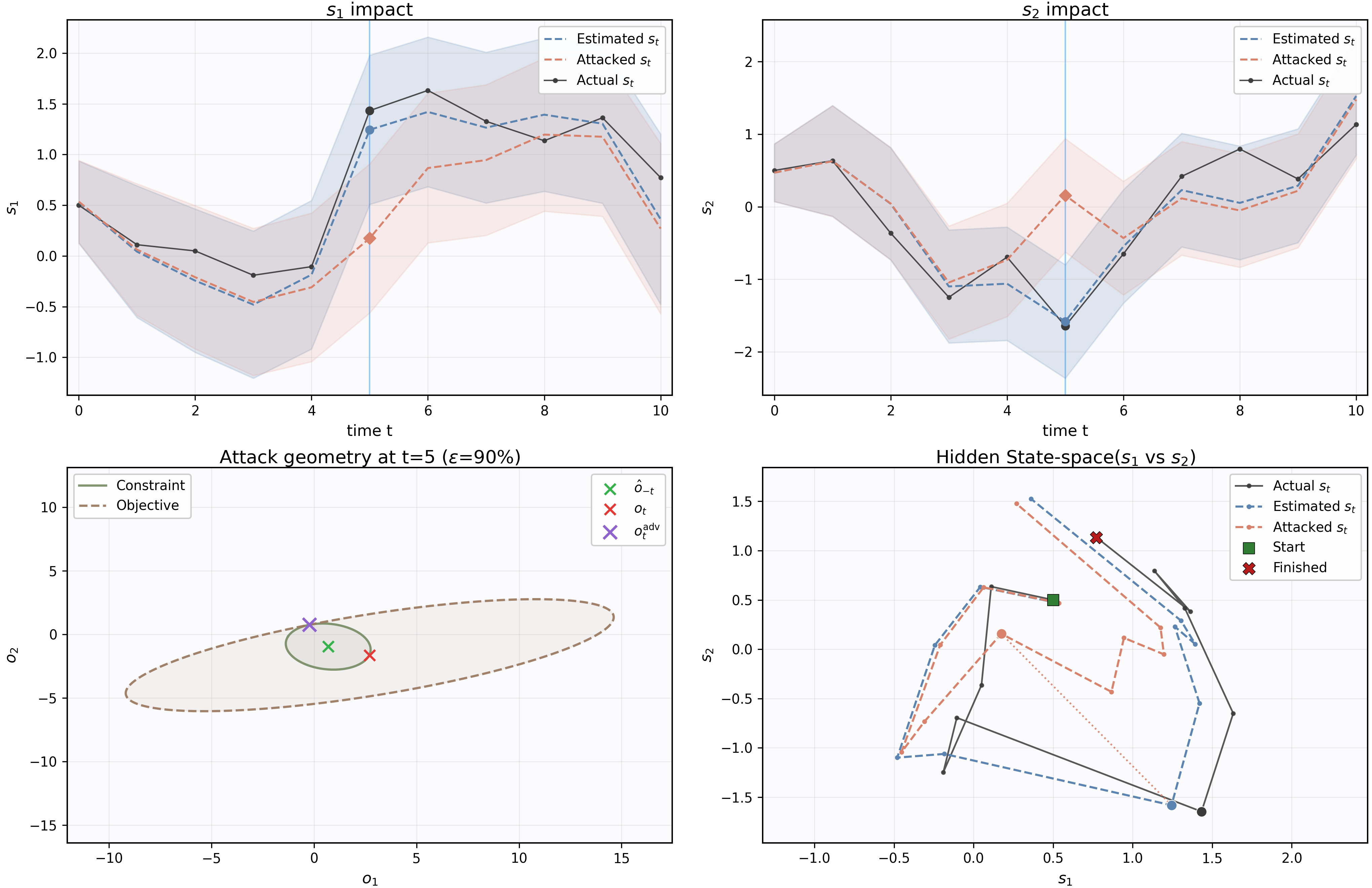}
    \caption{Impact of adversarial perturbation at time $t=5$.
    \textbf{Top-right:} Attack geometry in observation space. 
    The solid ellipse denotes the likelihood constraint. Dashed ellipse represents the objective level set evaluated at the optimum. \textbf{Top-left and bottom-left:} Smoothed hidden-state components $s_1$ and $s_2$ under baseline and adversarial settings, including $95\%$ credible intervals.
    \textbf{Bottom-right:} State-space trajectory comparison ($s_1$ versus $s_2$).}
    \label{fig:Results/AdvNaiveAttack.png}
\end{figure}

This case involves a two-dimensional input $o_t$. It represents the position of a navigation system operating on a robot. Consequently, in this setting, we focus exclusively on altering the robot's perception of its position. The model
matrices used are
{\small
\[
\mathbf A
=
\begin{bmatrix}
0.65 & 0.40\\
-0.15 & 0.70
\end{bmatrix},
\qquad
\mathbf B
=
\begin{bmatrix}
1.65 & 1.40\\
-0.15 & 0.70
\end{bmatrix},
\qquad
\mathbf F=\mathbf I_2,
\qquad
\mathbf G=\mathbf 0,
\]
\[
\mathbf W
=
\begin{bmatrix}
0.4800 & -0.2325\\
-0.2325 & 0.2100
\end{bmatrix},
\qquad
\mathbf V
=
\begin{bmatrix}
0.2730 & 0.0525\\
0.0525 & 0.7140
\end{bmatrix}.
\]
}
And the actions are independently sampled at each time step according to
\[
a_t\sim\operatorname{Unif}\!\left([-0.5,0.5]^2\right),
\qquad
t=0,\ldots,T-1.
\]

Results are shown in \autoref{fig:Results/AdvNaiveAttack.png}, where a
time-homogeneous SSM of length $T=12$ is attacked at time $t=5$. A first important conclusion is that perturbing a single observation does not typically affect the entire sequence. Instead, its impact is mainly concentrated on the targeted state and on nearby states, both before and after the attack time.

More importantly, the attack not only yields the worst-case perturbation under the prescribed constraint, but also identifies the most vulnerable direction in observation space. This information can guide improvements in sensing and filtering to reduce performance degradation and loss of accumulated reward.

\begin{figure}[t]
    \centering

    \subfloat[
        Dependence of attack direction on the constraint level
        $\epsilon$. The figure also shows the actual observation,
        the most likely observation, and the objective-function
        ellipses.
        \label{fig:epsilon-dependence}
    ]{
        \includegraphics[width=0.36\linewidth]{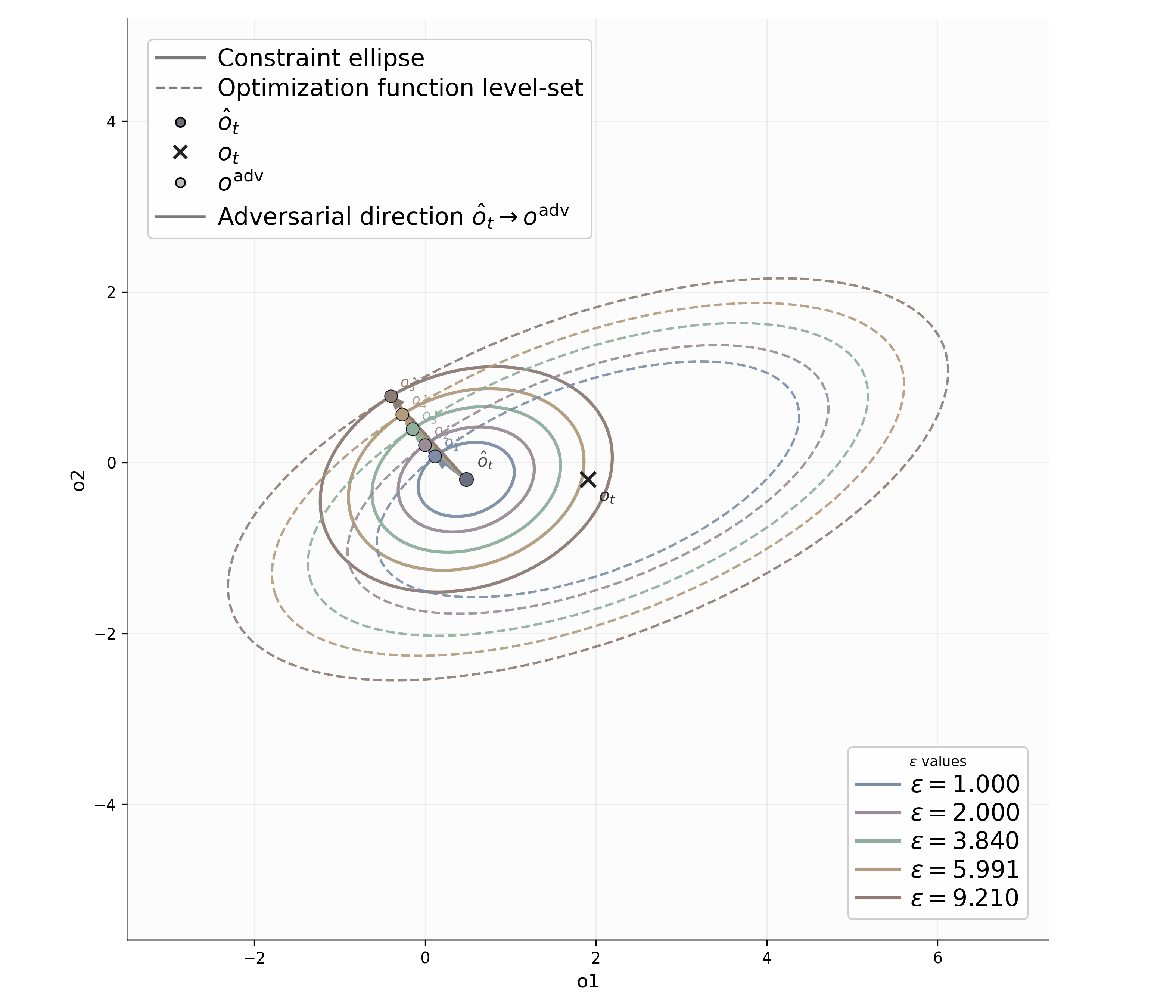}
    }
    \hspace{0.02\linewidth}
    \subfloat[
        Dependence of attack impact on attacked time step $t$.
        Top panel reports local estimation error at time $t$;
        bottom panel shows global error over the full state sequence.
        \label{fig:time-dependence}
    ]{
        \includegraphics[width=0.52\linewidth]{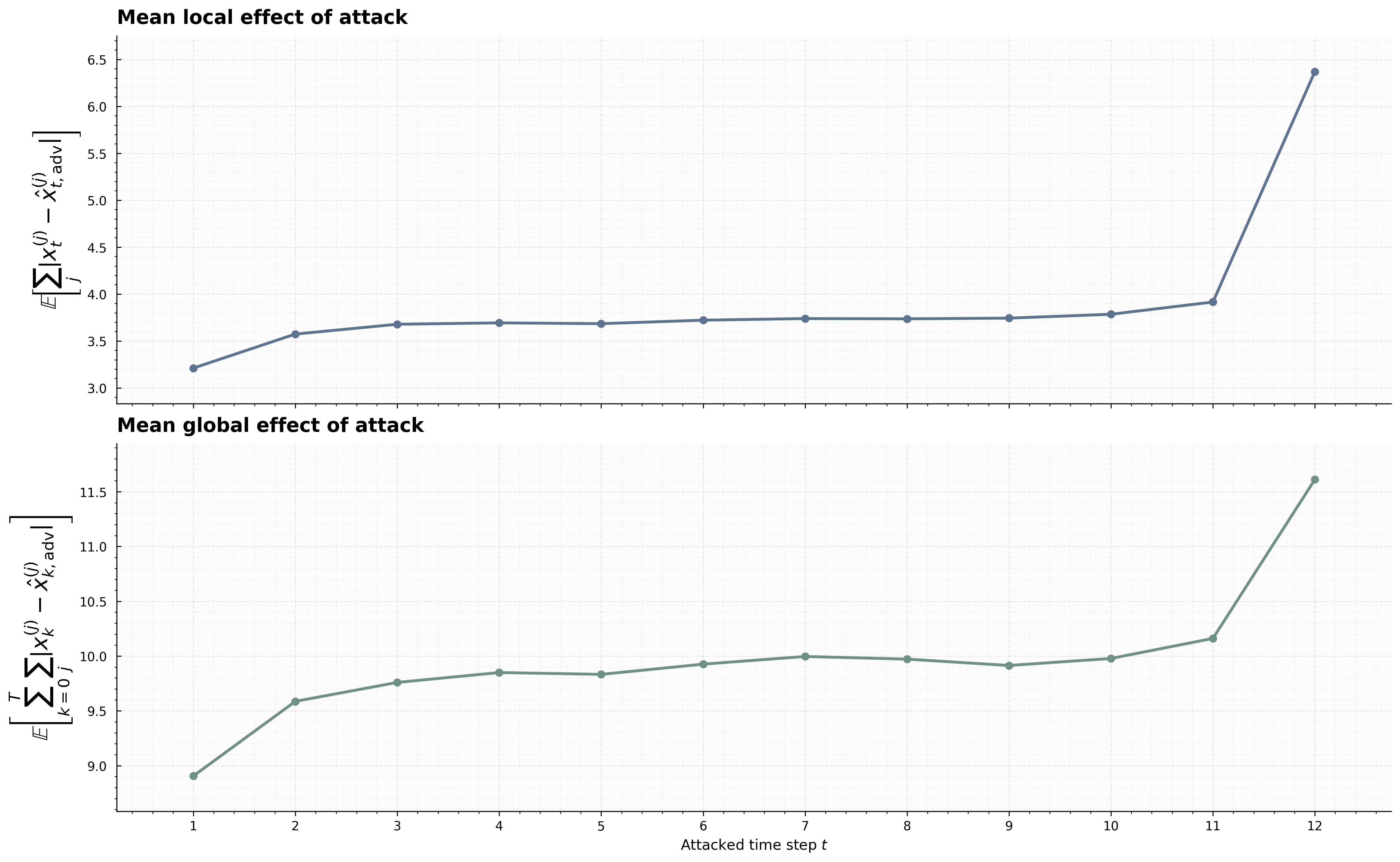}
    }

    \caption{Effect of constraint level $\epsilon$ and of attacked time
    step $t$ on adversarial perturbations and its impact on state
    estimation.}
    \label{fig:epsilon-time-dependence}
\end{figure}

A second relevant observation concerns the role of the constraint level $\epsilon$. Larger values enable stronger attacks, which generally produce larger estimation errors at the attacked time $t$. These stronger perturbations also tend to spread more visibly to neighboring state estimates. In addition, changing $\epsilon$ slightly  alter its perturb direction $o_t^{\text{adv}}-\hat o_t $ as \autoref{fig:epsilon-dependence} illustrates. Consequently, the most disruptive attack direction also depends on the likelihood constraint level.

Moreover, for a fixed constraint level $\epsilon$, the attack magnitude varies
with the attacked time index $t$. To examine this dependence, the entries of
$\mathbf A$, $\mathbf B$, $\mathbf F$, and $\mathbf G$, as well as those of
the raw matrices $\mathbf W_{\mathrm{raw}}$ and
$\mathbf V_{\mathrm{raw}}$, are sampled independently from
$\mathcal N(0,\sigma^2)$. Valid covariance matrices are then constructed as
$\mathbf W=\operatorname{proj}_{\mathbb S_+}
(\mathbf W_{\mathrm{raw}})+0.05\mathbf I$ and
$\mathbf V=\operatorname{proj}_{\mathbb S_+}
(\mathbf V_{\mathrm{raw}})+0.05\mathbf I$, where
$\operatorname{proj}_{\mathbb S_+}$ denotes projection onto the
positive-semidefinite cone. \autoref{fig:time-dependence} reports the local and global effects of the
perturbation across time. The local effect is the estimation error at the
attacked instant $t$, whereas the global effect measures the total error over
the full state sequence. The global distortion remains largely concentrated
around neighboring states, and both error criteria display a similar
qualitative pattern. Attacks at intermediate times produce comparable errors,
the final time step has a noticeably larger effect, and the initial and
penultimate time steps yield slightly smaller errors.

This behavior can be understood through the role of smoothing. For intermediate times, the effect of the perturbed observation is partially corrected by combining information from both past and future observations, which effectively reduces the feasible region under the likelihood constraint and limits the attack impact. At $t=T$, however, no future observations are available, so this corrective effect does not apply, making the attack more effective. At the other extreme, when $t=1$, all subsequent observations contribute to smoothing, leading to the strongest correction and therefore to the smallest error for a fixed constraint level $\epsilon$. 

\begin{figure}[t]
    \centering
    \includegraphics[width=0.99\linewidth]{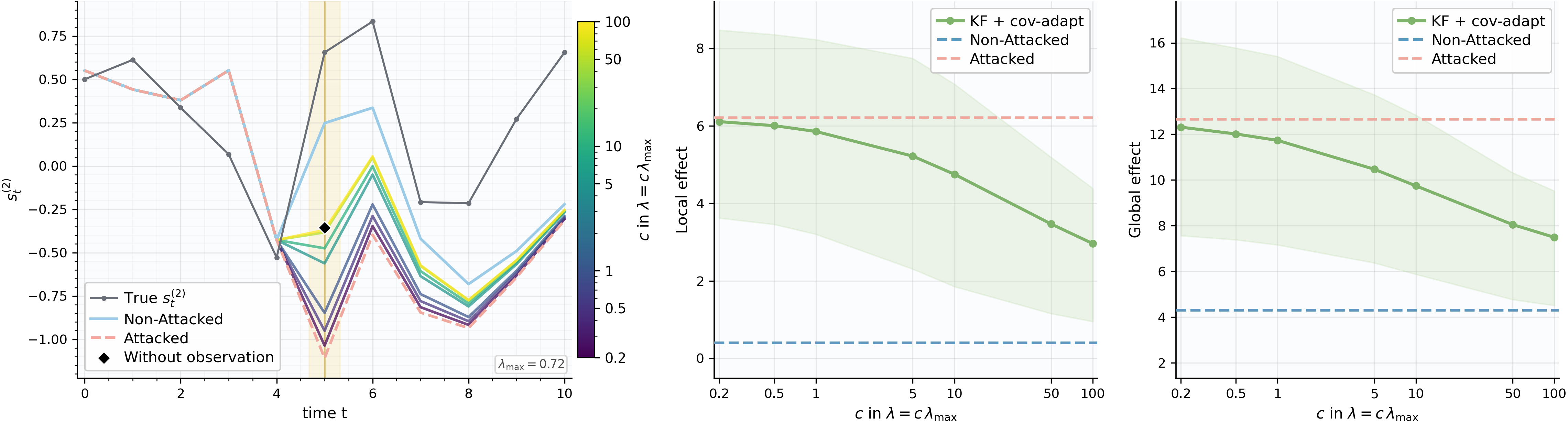}
    \caption{$\lambda$ dependence in the covariance adaptation defense mechanism
    \textbf{Left:} Second component of the hidden state space estimate for different lambda values. Black point corresponds to $\hat{s}_{t|-t}$. \textbf{Middle \& Right:} $\lambda$ dependence on the local/global effect of hidden state estimation averaged over $N=1000$ runs.}
    \label{fig:Results/lambda_sweep.png}
\end{figure}
    
Finally, the effect of the proposed defense mechanism introduced in \autoref{sec:OnlineBayesianDefense} is illustrated in \autoref{fig:Results/lambda_sweep.png} where the same randomly generated matrices where used. The leftmost panel shows how the second coordinate of the hidden-state estimate changes under an attack occurring at $t=5$ for different values of $\lambda$. To ensure generalization $\lambda$ is expressed as
\(
\lambda = c\lambda_{\max},
\)
where $\lambda_{\max}$ denotes the largest eigenvalue of the observation covariance matrix and $c$ is a scaling factor. We also set $\omega_s=\omega_c=0.5$ when constructing the prior attack probabilities, assigning equal weight to the state-risk and fixed-caution experts.

Results show that increasing $\lambda$ reduces both local and global effects of attack. Consequently, as $\lambda \to \infty$, the observation is effectively discarded, and the filtered hidden-state estimate converges to the prediction obtained without incorporating information at time $t$, namely $\hat{s}_{t|t-1}$. Additionally, the posterior probability of an attack increases only when the observation lies sufficiently close to the identified adversarial one. Therefore, for clean observations, $p_t^{(1)}(o_t) \approx 0$ and, consequently, $\gamma_t \approx 0$, so the covariance is not inflated and the resulting state inference remains essentially unchanged.

\FloatBarrier

\subsection{\textbf{Adversarial attacks on target-function estimation} (\autoref{subsec:lgssm_attack_on_g}) }
\label{subsec:attack_target_function_estimation}

\begin{figure}[t] \centering \includegraphics[width=0.995\linewidth]{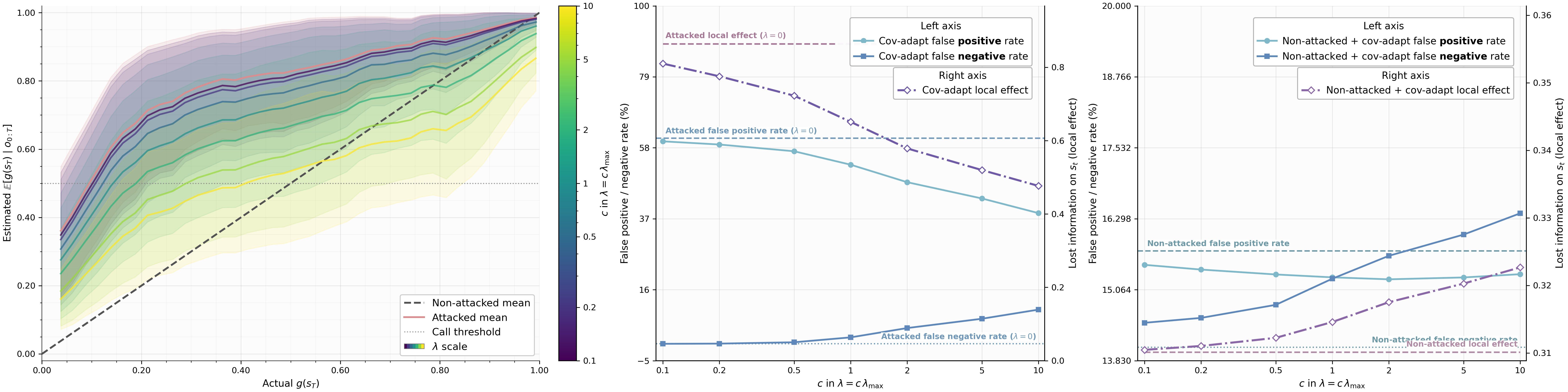} \caption{ Adversarial attacks on target-function estimation. \textbf{Left:} Comparison of $\mathbb{E}[g]$ under attack, both without defense (\textit{attacked}) and with directional covariance adaptation (\textit{DirCovAdapt}) for different $\lambda$ values. Dashed line represents non-attacked scenario. \textbf{Middle:} $\lambda$  dependence under attack with covariance adaptation. Left axis shows false-negative and false-positive rates, while right axis shows local effect, $\lvert \hat{s}^{\mathrm{adv}}_t-s_t \rvert^2$. Horizontal dashed lines represent values with no defense. \textbf{Right:} $\lambda$  dependence in the non-attacked setting with covariance adaptation. Left axis shows false-negative and false-positive rates, while Right axis shows local effect, $\lvert \hat{s}^{\mathrm{adv}}_t-s_t \rvert^2$. Horizontal dashed lines represent values without defense. } \label{fig:g_attack} \end{figure}

Following the robot-sensor example, we now extend the analysis to a three-dimensional setting, this a robot moving within then three dimensional space. The latent state \(s_t \in \mathbb{R}^3\) describes the unobserved condition of the system, the observation \(o_t \in \mathbb{R}^3\) contains the corresponding sensor measurements, and the control input \(u_t \in \mathbb{R}^3\) represents external interventions or operating conditions.

Unlike in \autoref{subsec:attack_state_estimation}, where the attack directly targets the state estimate \(s_t\), we now consider an attacker whose objective is to manipulate a scalar nonlinear functional of the latent state. In particular, we define the sigmoid-based score
\begin{equation} \begin{aligned} g(s_t)=\sigma\!\Big(&-3.2+1.0s_{t,1}+1.6s_{t,2}+1.6s_{t,3} +3.5s_{t,2}s_{t,3}\\ &+3.0s_{t,2}^{2}+3.0s_{t,3}^{2} +0.1s_{t,1}s_{t,2}+0.2s_{t,1}s_{t,3}\Big). \end{aligned} \label{eq:example_scalar_g} \end{equation} where \(\sigma(z)=1/(1+e^{-z})\). This score is interpreted as the probability of triggering an automated response, such as an alarm or a maintenance action. Therefore, the attacker aims to alter a nonlinear, decision-relevant quantity derived from the state.

For the numerical illustration, we set \(T=12\) and apply the attack at \(t=T\), which also allows us to evaluate the online covariance-adaptation defense. We adopt the decision rule
\(\mathbb{E}[g(s_t)] > 0.5\), and assume that the attacker seeks to increase \(\mathbb{E}[g(s_t)]\) as close to $1$ as possible, thereby maximizing the probability $g(s_t)$ that the automated response is triggered.

We perform several runs under the same attack configuration while applying online covariance adaptation with different values of \(\lambda\). Again, to obtain a scale-independent parametrization, we write
\(\lambda = c\lambda_{\max}\), where \(\lambda_{\max}\) denotes the largest eigenvalue of the observation covariance matrix, prior attack weights are $\omega_s=\omega_c=0.5$ and we also set a threshold for the posterior probability $\gamma_t$ of $0.3$. Results are reported in \autoref{fig:g_attack}.

The leftmost panel reports $\mathbb{E}[g(s_t)]$ under attack, both without defense and with covariance adaptation for different values of $\lambda$. As $\lambda$ increases, the defensive mechanism reduces the influence of the adversarial observation and brings the estimates closer to the identity line. However, excessively large values of $\lambda$ discount the observation too strongly, causing the defended estimate to fall below this line, especially for observations associated with $\mathbb{E}[g(s_t)]>0.5$, which are more likely to be interpreted as suspicious. Therefore, $\lambda$ must balance attack mitigation against excessive information loss.

The middle panel illustrates the same trade-off in the attacked setting. Increasing $\lambda$ reduces the false-positive rate, since the adversarial observation has less influence on the decision, and also decreases the local estimation error. Nevertheless, this improvement is accompanied by an increase in the false-negative rate. When the alarm should genuinely be triggered, the defense may incorrectly treat the corresponding observation as adversarial and discount it too strongly. This again suggests that an intermediate value of $\lambda$ is preferable.

\begin{figure}[t]
    \centering
    \includegraphics[width=0.6\linewidth]{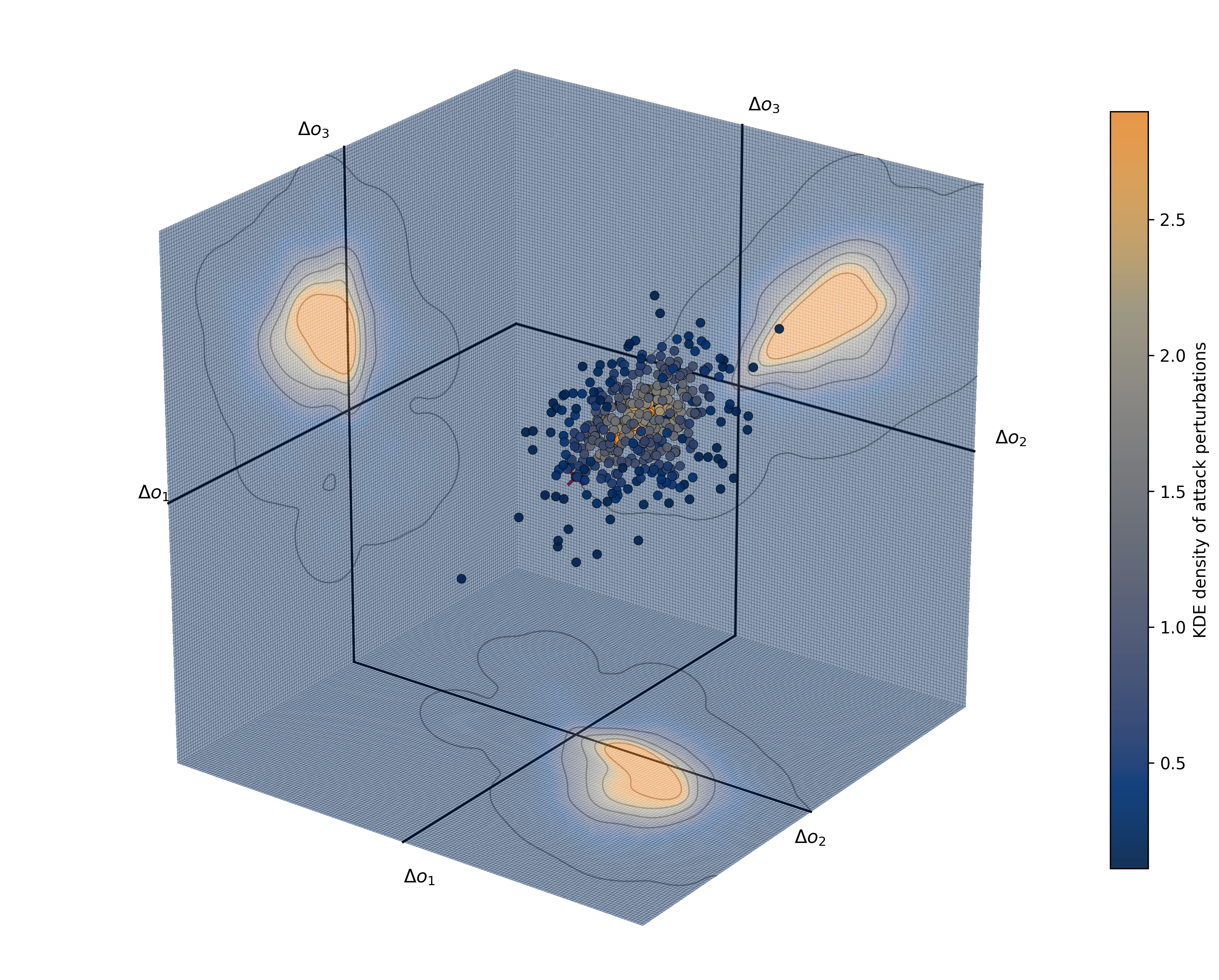}
    \caption{Estimated density of most disruptive directions in a 3D observation space for a fixed constraint level $\epsilon$.}
    \label{fig:attack_density}
\end{figure}

The rightmost panel considers the non-attacked setting and evaluates the effect of the defense in the presence of ordinary observational noise. A similar trade-off is observed: covariance adaptation reduces the false-positive rate but increases the false-negative rate. Moreover, as informative observations are progressively discounted, the local estimation error increases with $\lambda$. Therefore, the appropriate choice of $\lambda$ depends on the relative importance assigned by the defender to false positives and false negatives. Although the reduction in the false-positive rate is only around $1\%$, this improvement should not be regarded as negligible. In a sequential setting, where decisions are made at every time step, even a small increase in accuracy may accumulate and lead to a substantial improvement in overall performance, as \autoref{subsec:attack_rl_settings} shows.

Lastly, as an auxiliary tool, if the attack simulation is repeated multiple times, one can estimate a density over the most disruptive directions in the observation space. This is shown in Figure~\ref{fig:attack_density} where the first dimension does not seem to impact the disruptive direction. This procedure makes it possible to identify the least robust direction, that is, the direction along which perturbations most strongly exploit vulnerabilities not necessarily in the state estimation itself, but rather in the objective function of interest. In other words, these are the directions that have the greatest potential to alter downstream decision-making.

\FloatBarrier

\subsection{\textbf{Adversarial attacks in liner RL settings} (\autoref{subsec:rl_attack}) }
\label{subsec:attack_rl_settings}

To illustrate the RL attack setting, we aga in consider a two-dimensional navigation task with time-varying wind, formulated as a four-dimensional linear-Gaussian state-space model. The objective is to guide the agent toward a specified goal position while accounting for both its current position and the displacement induced by the wind. The latent state is
\[
s_t=
\begin{bmatrix}
p_{x,t} & p_{y,t} & d_{x,t} & d_{y,t}
\end{bmatrix}^{\top}
\in\mathbb{R}^4,
\]
where $(p_{x,t},p_{y,t})$ denotes the actual position of the agent and $(d_{x,t},d_{y,t})$ represents the current wind vector. The control input is the bounded two-dimensional action
\[
a_t=
\begin{bmatrix}
a_{x,t} & a_{y,t}
\end{bmatrix}^{\top}
\in[-1,1]^2,
\]
which determines the displacement selected by the agent toward a goal. The environment dynamics are described by
\begin{equation}
\label{eq:rl_wind_ssm_dynamics}
s_{t+1}=A_t s_t+B a_t+w_t,
\qquad
o_t=F s_t+v_t,
\end{equation}
where $w_t\sim\mathcal{N}(0,Q)$ and $v_t\sim\mathcal{N}(0,R)$ denote the process and observation noise, respectively. The transition, control, and observation matrices are
\[
A_t=
\begin{bmatrix}
1 & 0 & 1 & 0\\
0 & 1 & 0 & 1\\
0 & 0 & \rho_w\cos(\Delta\psi_t)
    & -\rho_w\sin(\Delta\psi_t)\\
0 & 0 & \rho_w\sin(\Delta\psi_t)
    & \rho_w\cos(\Delta\psi_t)
\end{bmatrix},
\qquad
B=
\begin{bmatrix}
I_2\\
0
\end{bmatrix},
\qquad
F=I_4.
\]
Thus, the position evolves according to the selected action and the current wind displacement, while the lower-right block of $A_t$ describes the wind evolution. The parameter $\rho_w\in(0,1)$ controls the persistence of the wind, whereas the rotation matrix $R(\Delta\psi_t)$ changes its direction by an angle $\Delta\psi_t$ at each time step.

The policy does not observe the observation directly. Instead, it receives the built hidden state
\[
o_t=
\begin{bmatrix}
y_{x,t} & y_{y,t} & y_{d_x,t} & y_{d_y,t}
\end{bmatrix}^{\top}
\in\mathbb{R}^4,
\]
which is transformed into the policy input
\begin{equation}
\label{eq:rl_wind_policy_obs}
z_t=
\begin{bmatrix}
(g_x-p_{x,t})/R_{\max}\,\,\,, 
&
(g_y-p_{y,t})/R_{\max}\,\,\,, 
&
d_{x,t}\,\,\,,
&
d_{y,t}
\end{bmatrix}^{\top}
\in\mathbb{R}^4,
\end{equation}
where $g=(g_x,g_y)$ is the goal location and $R_{\max}$ is the maximum navigation radius region, used to normalize the relative-position components and avoid scaling differences. Therefore, the first two coordinates encode the measured relative displacement to the goal, while the last two provide the measured wind vector.

An episode terminates when the actual position enters a ball of radius $r_{\mathrm{goal}}$ around the goal or when the maximum horizon is reached. Rewards is given as the agent gets closer to the goal with a penalty cost of $-1.0$ until termination, together with a positive terminal reward upon reaching the goal and a negative terminal penalty upon timeout.

The control policy is learned using Proximal Policy Optimization (PPO) \citep{schulman2017proximal} with an actor--critic architecture. The actor defines a two-dimensional squashed Gaussian policy,
\[
u_t\sim\mathcal{N}\!\left(\mu_\theta(z_t),\sigma^2I_2\right),
\qquad
a_t=\tanh(u_t)\in[-1,1]^2,
\]
where $\sigma$ is fixed, while the critic approximates the value function $V_\theta(z_t)$. During evaluation, the deterministic mean action
\[
a_t=\tanh\!\left(\mu_\theta(z_t)\right)
\]
is used.

The first step is to assess how well the approximate attack objective,
\(Q^\pi(m_T,a_T)\), performs relative to the ideal objective,
\(Q^\pi(s_T^0,a_T)\), based on the true state \(s_T^0\). We compare the
effectiveness of the attack based on the posterior mean \(m_T\) with that of an
oracle attack using \(s_T^0\), which is available only for evaluation. Moreover, the attack is not applied at every
time step. Instead, it is performed with probability
\(p_{\mathrm{attack}}=0.10\), ensuring that consecutive attacks are sufficiently
separated and do not influence each other.

Following \autoref{thm:state_estimation_substitution_error}, we consider
different ratios between the transition-noise covariance \(\mathbf{W}_T\) and
the observation one \(\mathbf{V}_T\). The base covariance matrix is
set to \(
\mathbf{W}_{\mathrm{base}}
=
\operatorname{diag}\!\left(
0.0196,\,
0.0196,\,
0.0100,\,
0.0100
\right).
\) 

Experiments also include an \(\epsilon\)-\textit{perturbation} baseline to provide a fair comparison. In this setting, perturbations are applied with the same probability as the adversarial attacks
but are sampled randomly from the same \(\epsilon\)-ellipsoidal feasible
region. This allows us to distinguish the effect of adversarial optimization
from that of merely adding perturbations of the same magnitude.
Moreover, we consider a \emph{noiseless} baseline, in which the agent acts on the
true state, and a \emph{noisy + KF} baseline, in which the agent acts on the
KF estimate with no adversarial perturbations.

Results are shown in \autoref{fig:covariance_sweep_RL}, where effectiveness is measured in terms of mean cumulative reward. In the leftmost setting, where the observation noise is considerably larger than the transition noise, the attack based on \(m_T\) yields a higher cumulative reward than the oracle attack based on \(s_T^0\). This indicates that the posterior mean does not provide a sufficiently accurate approximation of the actual state. Nevertheless, the attack based on \(m_T\) remains more effective than the random \(\epsilon\)-perturbation baseline. On the other hand, when the observation and transition covariances have the same magnitude, or when the observation noise is smaller than the transition noise, the attack based on \(m_T\) performs similarly to the oracle attack, as can be seen in the middle and right panels of \autoref{fig:covariance_sweep_RL}. These results indicate that the approximate objective closely reproduces the ideal objective when the observations are sufficiently informative relative to the uncertainty in the transition dynamics.

\begin{figure}[t]
    \centering
    \includegraphics[width=0.95\linewidth]{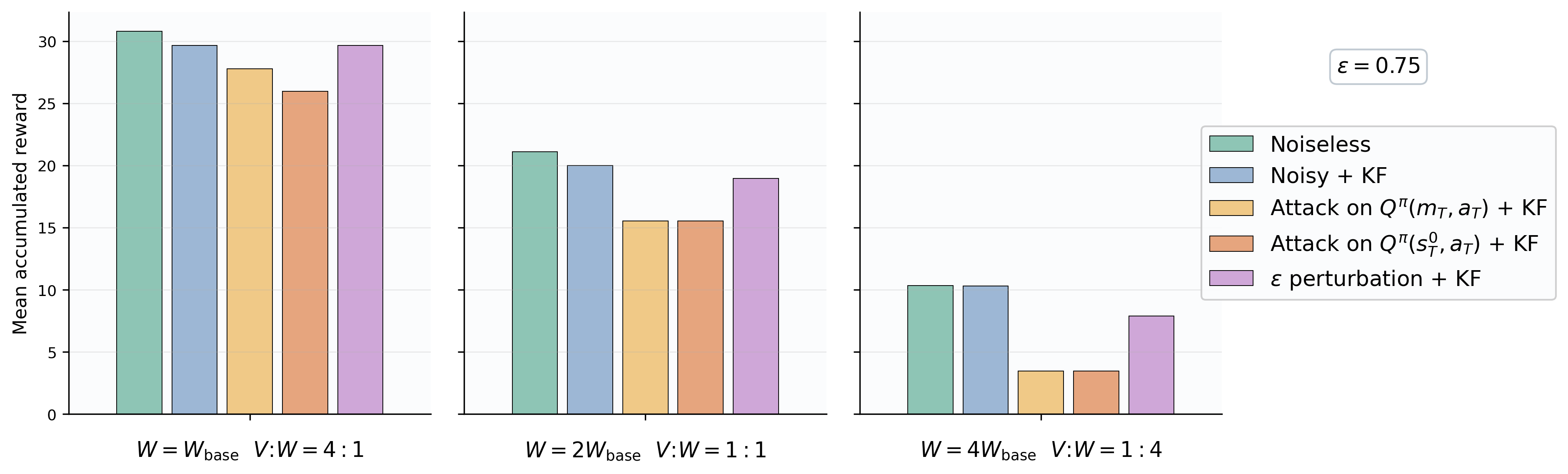}
    \caption{Mean cumulative reward over \(100\) episodes under different
    uncertainty configurations for constraints of $\epsilon = 0.75$. \textbf{(Left)}
    \(\mathbf{W}_T=\mathbf{W}_{\mathrm{base}}\) and
    \(\mathbf{V}_T=4\mathbf{W}_{\mathrm{base}}\).
    \textbf{(Middle)}
    \(\mathbf{W}_T=2\mathbf{W}_{\mathrm{base}}\) and
    \(\mathbf{V}_T=2\mathbf{W}_{\mathrm{base}}\).
    \textbf{(Right)}
    \(\mathbf{W}_T=4\mathbf{W}_{\mathrm{base}}\) and
    \(\mathbf{V}_T=\mathbf{W}_{\mathrm{base}}\).}
    \label{fig:covariance_sweep_RL}
\end{figure}

For further insight of the effect of the covariance matrices involved and for different epsilons, check \autoref{app:experimental}.

Once the effect of the covariance matrices on attack effectiveness has been examined, we set the observation-transition covariance ratio to \(1{:}1\), with \(\mathbf{W}_T=\mathbf{V}_T=2\mathbf{W}_{\mathrm{base}}\). Moreover, from this point onward, we consider only the practical attack based on the posterior mean \(m_T\).

We next evaluate the performance of the defenses under this uncertainty setting. The \emph{attack + KF} setting defined by \eqref{eq:practical_rl_attack_objective} with probability \(p_{\mathrm{atk}}=0.1\) is performed, again considering the \emph{\(\epsilon\)-perturbation + KF} setting for a fair comparison. Additionally, \emph{noiseless} and \emph{noisy + KF} settings are included for benchmarking purposes.

We additionally evaluate the directional covariance-adaptation defense proposed in \autoref{sec:OnlineBayesianDefense}. The method is tested using \(\lambda=c\lambda_{\max}(\mathbf{R})\), with \(c\in\{0.5,1,2\}\), where \(\lambda_{\max}(\mathbf{R})\) denotes the largest eigenvalue of the observation covariance matrix. We also compare the proposed defense with the outlier-robust KF in \cite{duranmartin2024outlier}, considering both the inverse-multiquadric variant, denoted by WoLF-IMQ, and the thresholded Mahalanobis-distance variant, denoted by WoLF-TMD.

\begin{figure}[t]
    \centering
    \includegraphics[width=0.98\linewidth]{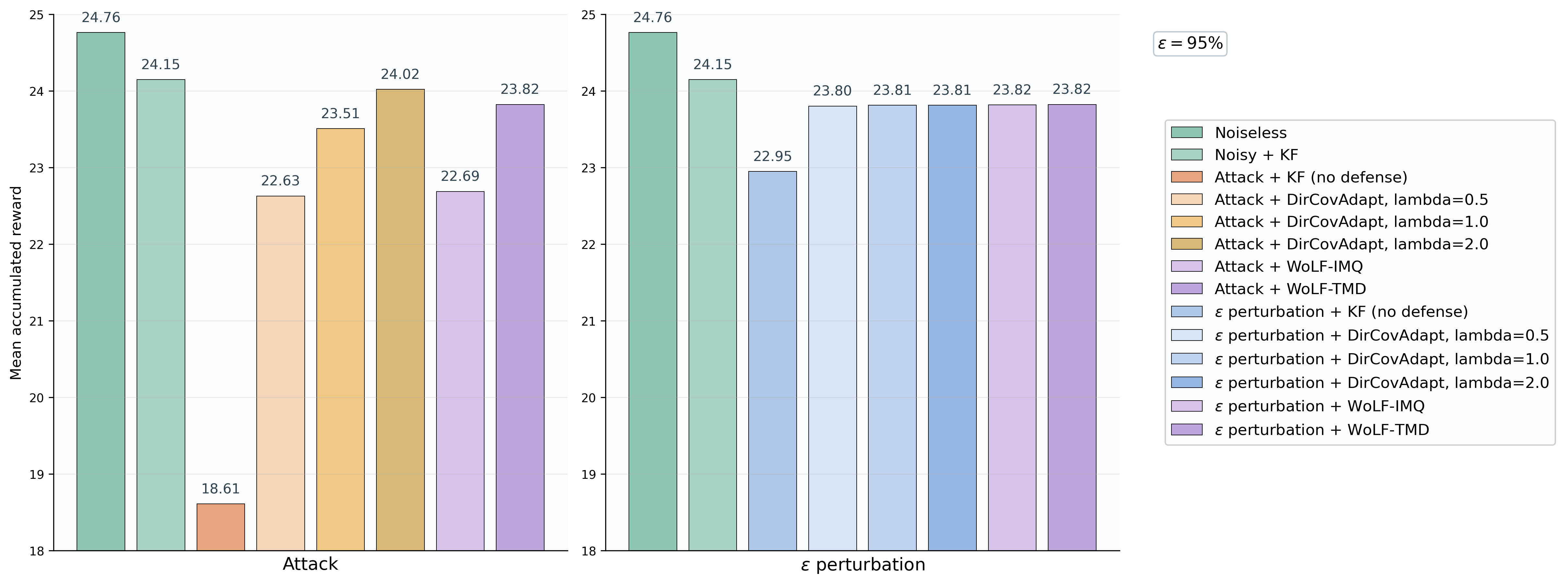}
    \caption{Mean cumulative reward over \(250\) episodes. \textbf{Left:} Performance of \emph{noiseless}, \emph{noisy + KF}, and \emph{attack + KF} settings, together with the proposed \textit{directional covariance-adaptation} defense and the \textit{WoLF} baselines under adversarial attacks. \textbf{Right:} Performance of the \emph{noiseless}, \emph{noisy + KF}, and \emph{\(\epsilon\)-perturbation + KF} settings, together with the proposed \textit{directional covariance-adaptation} defense and the \textit{WoLF} baselines under random \(\epsilon\)-perturbations.}
    \label{fig:mean_cumulative_rewards_95}
\end{figure}

The mean cumulative rewards obtained over \(250\) episodes are shown in
\autoref{fig:mean_cumulative_rewards_95} for \(\epsilon=0.95\). The corresponding
results for \(\epsilon=0.75\) are provided in
\autoref{app:experimental}. As expected, the \emph{noiseless}
setting achieves the highest reward because the policy receives the uncorrupted
state. It is followed by the \emph{noisy + KF} setting, in which the KF partially compensates for the transition and observation noise.

The \emph{attack + KF} setting consistently yields lower rewards than both the
\emph{noisy + KF} and \emph{\(\epsilon\)-perturbation + KF} settings. This
performance gap shows that the direction of the perturbation, rather than only
its magnitude, is crucial for disrupting the agent's behaviour. Further
evidence is provided in \autoref{fig:perturbation_norm}, which compares the
perturbation norms produced by the optimized attack and the random
\(\epsilon\)-perturbation baseline. The optimized attack does not necessarily
lie on the boundary of the feasible ellipsoid and can therefore have  smaller
euclidean norm than the random perturbations. Nevertheless, it produces a
larger reduction in cumulative reward. This confirms that a carefully selected
perturbation direction can be more damaging than random noise of greater
magnitude.

Regarding defense performance, the left panel of
\autoref{fig:mean_cumulative_rewards_95} shows the effect of the different
methods under adversarial observations. For the proposed covariance-adaptation
defense, increasing \(\lambda\) generally improves the cumulative reward, since
stronger covariance inflation reduces the influence of the potentially harmful
observation. In contrast, WoLF-IMQ and WoLF-TMD downweight observations
according to their discrepancy from the predicted measurement, without
accounting for how the perturbation direction affects the critic. Consequently,
adversarial observations that remain plausible under the nominal observation
model may not be sufficiently discounted. Moreover, by modifying only the
covariance along the estimated harmful direction, the proposed defense preserves
the information accumulated along orthogonal directions.

This difference in performance dissapears in the right panel of \autoref{fig:mean_cumulative_rewards_95}, where the defenses
are evaluated under random \(\epsilon\)-perturbations. Here, both directional
covariance adaptation, WoLF-TMD and WoLF-IMQ perform well as they can strongly
discount perturbations that appear anomalous under the nominal model.

Finally, the behavior of the covariance-adaptation defense differs depending on whether the observations are generated by the optimized attack or by random \(\epsilon\)-perturbations. This is illustrated in the left panel of \autoref{fig:perturbation_norm}, which shows the distribution of the posterior attack probabilities \(\gamma_t\). Under the optimized attack, these probabilities are more broadly distributed, indicating that the defense identifies different levels of adversarial risk according to the evidence provided by the weighted experts. In contrast, under random \(\epsilon\)-perturbations sampled near the boundary of the feasible region, \(\gamma_t\) is generally concentrated closer to zero. This difference suggests that the distribution of \(\gamma_t\) could help distinguish between deliberately optimized attacks and unusual large random observation noise.

\begin{figure}[t]
    \centering
    \includegraphics[width=0.8\linewidth]{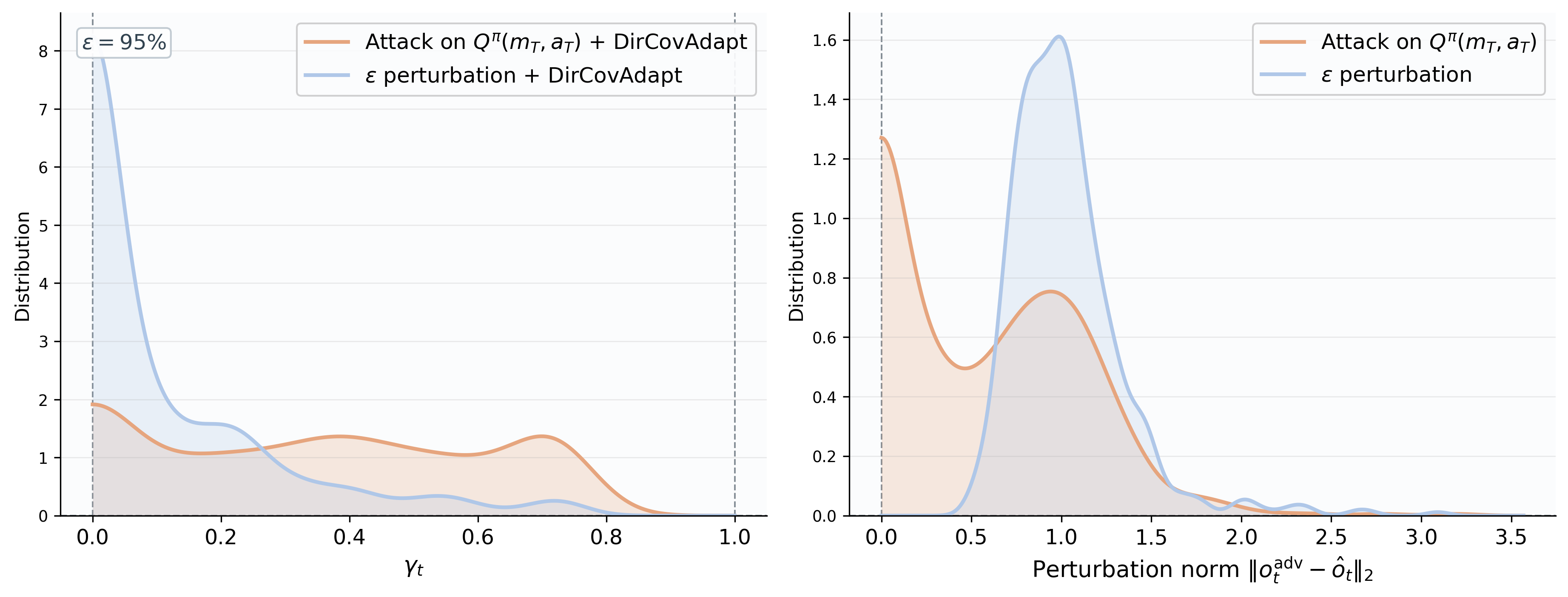}
    \caption{Comparison of the \textit{attack + KF} and
    \textit{\(\epsilon\)-perturbation + KF} perturbations.
    \textbf{Left:} Defense activation score \(\gamma_t\) under directional
    covariance adaptation.
    \textbf{Right:} Distribution of the perturbation magnitude
    \(\lVert o_t^{\mathrm{adv}}-\hat{o}_t\rVert_2\).}
    \label{fig:perturbation_norm}
\end{figure}

\FloatBarrier

\subsection{\textbf{Adversarial attacks in non-linear RL settings}}
\label{subsec:attack_rl_settings_nonlinear}

To further evaluate the proposed approach, we consider CartPole, a standard nonlinear RL benchmark with a four-dimensional state space. The experiments use the \texttt{CartPole-v1} environment from \texttt{Gymnasium} to generate the ground-truth dynamics, while the control policy is a pretrained Stable-Baselines3 DQN agent loaded from the public \texttt{sb3/dqn-CartPole-v1} checkpoint. The environment implementation and pretrained policy are available at
\hyperlink{https://gymnasium.farama.org/environments/classic_control/cart_pole}{link}
and \hyperlink{https://huggingface.co/sb3/dqn-CartPole-v1}{link},
respectively.

Unlike the previous experiments, the CartPole dynamics do not define a DLM. Instead, we construct a local state-space approximation from the nonlinear dynamics implemented by Gymnasium, using the same Euler discretization step as the simulator. The state is defined as
\[
s_t
=
\begin{bmatrix}
x_t &
\dot{x}_t &
\theta_t &
\dot{\theta}_t
\end{bmatrix}^{\top},
\]
where \(x_t\) and \(\dot{x}_t\) denote the cart position and velocity, while \(\theta_t\) and \(\dot{\theta}_t\) denote the pole angle and angular velocity.

Let \(f(s_t,u_t)\) denote the nonlinear discrete-time transition function. Following the standard Extended KF formulation \citep{murphy2023}, the dynamics are locally linearized around the current posterior mean and control input. Thus, the EKF replaces the nonlinear transition with a time-varying linear approximation that is recomputed at each time step. Since the complete CartPole state is observed with additive noise, the observation model is linear, with \(\mathbf{F}_t=\mathbf{I}\).

\begin{figure}[t]
    \centering
    \includegraphics[width=0.95\linewidth]{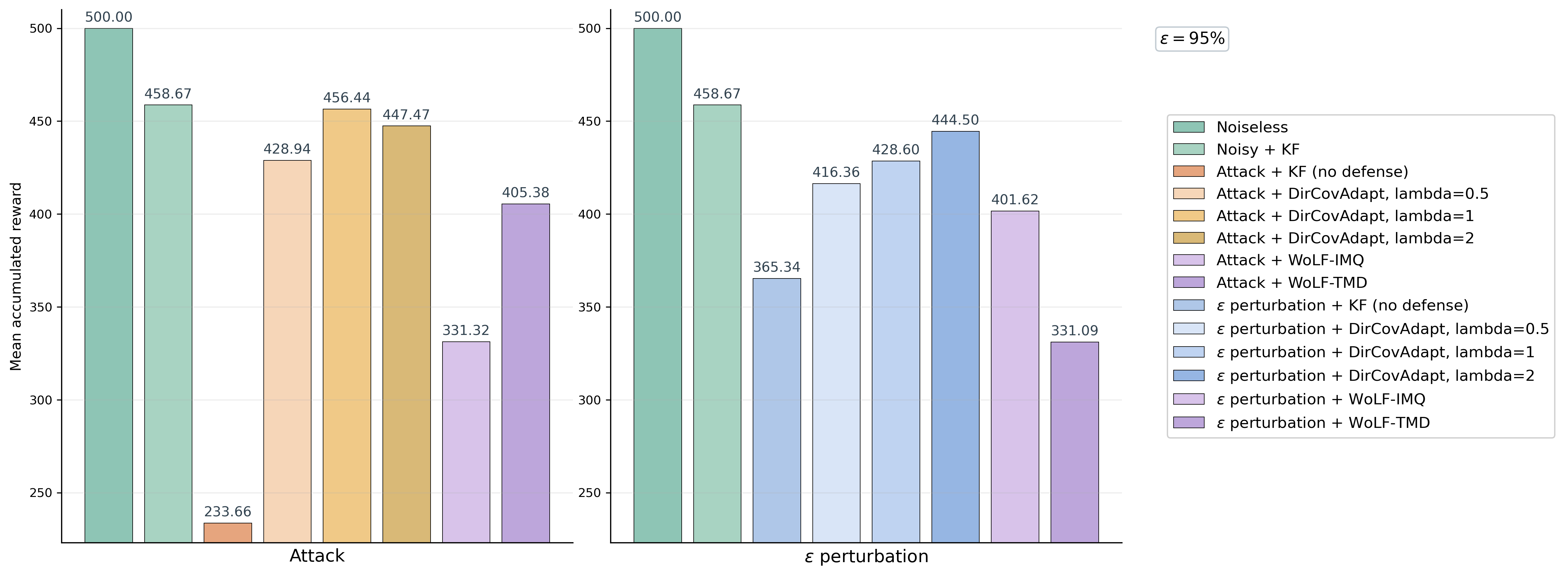}
    \caption{Mean cumulative reward over \(250\) episodes for the two perturbation settings in the CartPole environment. \textbf{Left:} Performance of the \emph{noiseless}, \emph{noisy + EKF}, and \emph{attack + EKF} settings, together with the proposed directional covariance-adaptation defense and the WoLF baselines under adversarial attacks. \textbf{Right:} Performance of the \emph{noiseless}, \emph{noisy + EKF}, and \emph{\(\epsilon\)-perturbation + EKF} settings, together with the proposed directional covariance-adaptation defense and the WoLF baselines under random \(\epsilon\)-perturbations.}
    \label{fig:cartpole_benchmark}
\end{figure}

Moreover, the process covariance \(\mathbf{W}_t\) represents not only aleatoric uncertainty, but also the epistemic one introduced by the local lineal approximation. It can therefore be interpreted as an effective representation of both process noise and uncertainty arising from the approximate transition model. This makes this setting particularly relevant for evaluating the proposed attack and defense, as the transition-noise variance is ten times higher than the observation one.

The results are shown in \autoref{fig:cartpole_benchmark}. The practical attack based on \(m_T\) is particularly effective in this setting. Since the transition model is approximate whereas the observation model remains directly informative, most of the additional uncertainty is assigned to the transition dynamics. Consequently, the posterior estimate is strongly informed by the observations, allowing the approximate attack objective to closely represent the objective based on the true state. As shown in the left panel, the optimized attack produces a substantially lower cumulative reward than both the \emph{noiseless} baseline and the random \(\epsilon\)-perturbation setting.

Regarding defense performance, directional covariance adaptation achieves its best result for
\(\lambda=2\lambda_{\max}(\mathbf{R})\). This stronger inflation reduces the influence of the adversarial observation only along the estimated harmful direction. In contrast to full covariance inflation, information accumulated along orthogonal directions is preserved, resulting in a more stable state estimate despite the large transition and model uncertainty.

Overall, these results indicate that directional covariance adaptation is especially beneficial when transition uncertainty is high. By avoiding unnecessary covariance inflation along unaffected directions, the defense preserves previously accumulated information and maintains a more stable state estimate.

\FloatBarrier

\section{Discussion} 

This work has developed a strategy for identifying adversarial attacks in the observation space under probabilistic plausibility constraints. This allows us to determine the directions that have the greatest effect on the inferred state while remaining plausible under the predictive model. The proposed framework extends standard adversarial machine learning formulations by replacing direct perturbation bounds with a likelihood constraint, making the attacks less likely to be detected by model-based anomaly-detection systems. Moreover, by exploiting the sequential structure of SSMs, the attack accounts for the information propagated through both the transition and observation
models, rather than considering the attacked observation model in isolation. The methodology can also be adapted to different attack objectives and was successfully extended to RL.

In addition to the attack formulation, we developed a defense mechanism for
online inference. The defense is
formulated within an online Bayesian adaptation framework that combines prior
beliefs about the occurrence of an attack with the evidence supplied by the
current observation. When an attack is considered likely, the observation
covariance is inflated selectively along the estimated adversarial direction. This directional adaptation allows the filter to discount harmful information to the decision objective, rather than modifying the entire covariance matrix. In our experiments, this task-dependent defense performed comparably to the considered state-of-the-art robust filtering methods in some settings, while outperforming them when the observation noise was substantially smaller than the transition noise.

The main limitation of the proposed method is its additional computational
cost. For a dense system of dimension $d$, WoLF \citep{duranmartin2024outlier} has complexity $O(d^3)$,
which is the same order as the standard KF and mainly arises from
the inversion of dense covariance matrices.
By contrast, our method additionally solves an inner projected-gradient
optimization problem. With $N_{\mathrm{PGD}}$ iterations, its approximate
cost is
\[
O\!\left(d^3+N_{\mathrm{PGD}}d^2\right),
\]
where the $O(d^3)$ term corresponds to the required matrix inversion or
factorization, while each gradient and projection step costs approximately
$O(d^2)$. Thus, the increased decision-specific robustness is obtained at a
higher computational cost, which may become relevant in high-dimensional or
strict real-time applications.

Several directions remain for future work. First, the framework could be extended to attacks applied at multiple time steps. As in the RL setting, attacks that are sufficiently separated in time may be treated approximately independently, whereas nearby attacks should be optimized jointly to account for their temporal dependence. 

Second, this work considers only white-box attacks. A natural next step is to account for uncertainty in the SSM parameters, known as gray-box settings. By placing a distribution over model parameterizations, both the plausibility constraint and the attack objective could be averaged across models, thereby accounting for parameter uncertainty within a single optimization problem.

\section*{Funding}

M. Santos-Pascual acknowledges the Spanish Ministry of Science,
Innovation and Universities for the FPU24/04169 Ph.D. scholarship.



\bibliographystyle{apacite}
\bibliography{references}


\newpage

\appendix

\section{Proofs}
\label{app:affine_representations}

This appendix provides the explicit definitions of the coefficient matrices
introduced in Lemmas~\ref{lem:past_unroll} and
\ref{lem:explicit_o_a_rewrite}, together with the auxiliary derivations.

The system matrices and the sequences of KF gains
$\{K_u\}_{u=1}^{T}$ and RTS smoothing gains
$\{J_u\}_{u=1}^{T-1}$ are fixed with respect to the observation values under consideration. Moreover, we use the conventions
\begin{equation}
\overleftarrow{\prod_{j=p}^{q}}\mathbf M_j
:=
\mathbf M_q\mathbf M_{q-1}\cdots\mathbf M_p,
\qquad
\overrightarrow{\prod_{j=p}^{q}}J_j
:=
J_pJ_{p+1}\cdots J_q,
\label{eq:app_ordered_products}
\end{equation}
for $p\leq q$. In both cases, an empty product is defined as the identity
matrix:
\begin{equation}
\overleftarrow{\prod_{j=p}^{q}}\mathbf M_j
=
\overrightarrow{\prod_{j=p}^{q}}J_j
:=
\mathbf I,
\qquad
p>q.
\label{eq:app_empty_products}
\end{equation}

\subsection{Proof of \autoref{subsec:lgssm_attack}}
\label{app:proofslgssm_attack}

\begin{proof}[Proof of Lemma~\ref{lem:past_unroll}]
For $t\in\{1,\ldots,T\}$ and $k\in\{0,\ldots,T-t\}$, we show that
\begin{equation}
m_{t+k}
=
\left(\overleftarrow{\prod_{j=t}^{t+k}}\mathbf M_j\right)m_{t-1}
+
\sum_{i=0}^{k}
\left(\overleftarrow{\prod_{j=t+i+1}^{t+k}}\mathbf M_j\right)K_{t+i}o_{t+i}
+
\sum_{i=0}^{k}
\left(\overleftarrow{\prod_{j=t+i+1}^{t+k}}\mathbf M_j\right)\mathbf U_{t+i}a_{t+i-1}.
\label{eq:app_filtered_explicit}
\end{equation}
We proceed by induction on $k$. For $k=0$, using
$\overleftarrow{\prod_{j=t}^{t}}\mathbf M_j=\mathbf M_t$ and
$\overleftarrow{\prod_{j=t+1}^{t}}\mathbf M_j=\mathbf I$, the right-hand
side of \eqref{eq:app_filtered_explicit} reduces to
\[
m_t=\mathbf M_tm_{t-1}+\mathbf U_ta_{t-1}+K_to_t,
\]
which is exactly the one-step affine filtering recursion.

Assume now that \eqref{eq:app_filtered_explicit} holds for $k-1$. Applying
the filtering recursion at time $t+k$ gives
\[
m_{t+k}
=
\mathbf M_{t+k}m_{t+k-1}
+
\mathbf U_{t+k}a_{t+k-1}
+
K_{t+k}o_{t+k}.
\]
Substituting the induction hypothesis and using
\[
\mathbf M_{t+k}
\left(\overleftarrow{\prod_{j=p}^{t+k-1}}\mathbf M_j\right)
=
\overleftarrow{\prod_{j=p}^{t+k}}\mathbf M_j,
\]
we obtain
\begin{align}
m_{t+k}
&=
\left(
\overleftarrow{\prod_{j=t}^{t+k}}\mathbf M_j
\right)m_{t-1}
\nonumber\,+
\sum_{i=0}^{k-1}
\left(
\overleftarrow{\prod_{j=t+i+1}^{t+k}}\mathbf M_j
\right)
K_{t+i}o_{t+i}
\nonumber\\
&\quad+
\sum_{i=0}^{k-1}
\left(
\overleftarrow{\prod_{j=t+i+1}^{t+k}}\mathbf M_j
\right)
\mathbf U_{t+i}a_{t+i-1}
\nonumber\,+
K_{t+k}o_{t+k}
+
\mathbf U_{t+k}a_{t+k-1}.
\label{eq:app_filtered_induction_step}
\end{align}
Since
$\overleftarrow{\prod_{j=t+k+1}^{t+k}}\mathbf M_j=\mathbf I$, the last
observation and action terms are precisely the terms corresponding to $i=k$.
Therefore, \eqref{eq:app_filtered_explicit} also holds for $k$.

The coefficient matrices introduced in Lemma~\ref{lem:past_unroll} are
\begin{equation}
\boldsymbol{\Phi}_{t,k}
:=
\overleftarrow{\prod_{j=t}^{t+k}}\mathbf M_j,
\label{eq:app_Phi_filtered_definition}
\end{equation}
\begin{equation}
\boldsymbol{\Gamma}^{(o)}_{t,k,i}
:=
\left(\overleftarrow{\prod_{j=t+i+1}^{t+k}}\mathbf M_j\right)K_{t+i},
\qquad i=0,\ldots,k,
\label{eq:app_Gamma_o_definition}
\end{equation}
and
\begin{equation}
\boldsymbol{\Gamma}^{(a)}_{t,k,i}
:=
\left(\overleftarrow{\prod_{j=t+i+1}^{t+k}}\mathbf M_j\right)\mathbf U_{t+i},
\qquad i=0,\ldots,k.
\label{eq:app_Gamma_a_definition}
\end{equation}
Substituting these definitions into \eqref{eq:app_filtered_explicit} gives
\[
m_{t+k}
=
\boldsymbol{\Phi}_{t,k}m_{t-1}
+
\sum_{i=0}^{k}\boldsymbol{\Gamma}^{(o)}_{t,k,i}o_{t+i}
+
\sum_{i=0}^{k}\boldsymbol{\Gamma}^{(a)}_{t,k,i}a_{t+i-1},
\]
which proves Lemma~\ref{lem:past_unroll}.
\end{proof}


Recall that the RTS recursion can be written as
\begin{equation}
m_{u|T}
=
\mathbf Q_um_u
+
J_um_{u+1|T}
-
J_u\mathbf B_{u+1}a_u,
\qquad
\mathbf Q_u:=\mathbf I-J_u\mathbf A_{u+1},
\label{eq:app_rts_affine_recursion}
\end{equation}
with terminal condition $m_{T|T}=m_T$ and $\mathbf Q_T:=\mathbf I$.

Using the convention
\[
\overrightarrow{\prod_{j=t}^{r-1}}J_j
=
J_tJ_{t+1}\cdots J_{r-1},
\qquad
\overrightarrow{\prod_{j=t}^{t-1}}J_j
=
\mathbf I,
\]
we unroll \eqref{eq:app_rts_affine_recursion} backwards from time $t$. The first substitution gives
\[
m_{t|T}
=
\mathbf Q_tm_t
+
J_tm_{t+1|T}
-
J_t\mathbf B_{t+1}a_t.
\]
Applying the same recursion to $m_{t+1|T}$ gives
\[
m_{t|T}
=
\mathbf Q_tm_t
+
J_t\mathbf Q_{t+1}m_{t+1}
+
J_tJ_{t+1}m_{t+2|T}
-
J_t\mathbf B_{t+1}a_t
-
J_tJ_{t+1}\mathbf B_{t+2}a_{t+1}.
\]
More generally, after unrolling the recursion up to
$q\in\{t,\ldots,T-1\}$, we obtain
\begin{equation}
m_{t|T}
=
\sum_{r=t}^{q}
\left(\overrightarrow{\prod_{j=t}^{r-1}}J_j\right)\mathbf Q_rm_r
+
\left(\overrightarrow{\prod_{j=t}^{q}}J_j\right)m_{q+1|T}
-
\sum_{r=t}^{q}
\left(\overrightarrow{\prod_{j=t}^{r-1}}J_j\right)J_r\mathbf B_{r+1}a_r.
\label{eq:app_rts_partial_unrolling}
\end{equation}
Indeed, substituting
\[
m_{q+1|T}
=
\mathbf Q_{q+1}m_{q+1}
+
J_{q+1}m_{q+2|T}
-
J_{q+1}\mathbf B_{q+2}a_{q+1}
\]
into \eqref{eq:app_rts_partial_unrolling} and using
\[
\left(\overrightarrow{\prod_{j=t}^{q}}J_j\right)J_{q+1}
=
\overrightarrow{\prod_{j=t}^{q+1}}J_j
\]
extends both sums from $q$ to $q+1$. Setting $q=T-1$ in \eqref{eq:app_rts_partial_unrolling} gives
\begin{equation}
m_{t|T}
=
\sum_{r=t}^{T-1}
\left(\overrightarrow{\prod_{j=t}^{r-1}}J_j\right)\mathbf Q_rm_r
+
\left(\overrightarrow{\prod_{j=t}^{T-1}}J_j\right)m_{T|T}
-
\sum_{r=t}^{T-1}
\left(\overrightarrow{\prod_{j=t}^{r-1}}J_j\right)J_r\mathbf B_{r+1}a_r.
\label{eq:app_rts_terminal_expansion}
\end{equation}
Since $m_{T|T}=m_T$ and $\mathbf Q_T=\mathbf I$, the terminal term can be
incorporated into the first sum. Therefore,
\begin{equation}
m_{t|T}
=
\sum_{r=t}^{T}
\left(\overrightarrow{\prod_{j=t}^{r-1}}J_j\right)\mathbf Q_rm_r
-
\sum_{r=t}^{T-1}
\left(\overrightarrow{\prod_{j=t}^{r-1}}J_j\right)J_r\mathbf B_{r+1}a_r.
\label{eq:app_rts_fully_unrolled}
\end{equation}

\begin{proof}[Proof of Lemma~\ref{lem:explicit_o_a_rewrite}]
For every $r\in\{t,\ldots,T\}$, the filtered expansion
\eqref{eq:app_filtered_explicit}, applied with $k=r-t$, gives
\begin{equation}
m_r
=
\left(\overleftarrow{\prod_{j=t}^{r}}\mathbf M_j\right)m_{t-1}
+
\sum_{u=t}^{r}
\left(\overleftarrow{\prod_{j=u+1}^{r}}\mathbf M_j\right)K_uo_u
+
\sum_{u=t}^{r}
\left(\overleftarrow{\prod_{j=u+1}^{r}}\mathbf M_j\right)\mathbf U_ua_{u-1}.
\label{eq:app_filtered_mean_for_rts}
\end{equation}
Substituting \eqref{eq:app_filtered_mean_for_rts} into
\eqref{eq:app_rts_fully_unrolled}, the coefficient multiplying $m_{t-1}$ is
\begin{equation}
\mathbf H^{(m)}_t
:=
\sum_{r=t}^{T}
\left(\overrightarrow{\prod_{j=t}^{r-1}}J_j\right)
\mathbf Q_r
\left(\overleftarrow{\prod_{j=t}^{r}}\mathbf M_j\right).
\label{eq:app_Hm_definition}
\end{equation}
The observation terms initially have the form
\[
\sum_{r=t}^{T}
\sum_{u=t}^{r}
\left(\overrightarrow{\prod_{j=t}^{r-1}}J_j\right)
\mathbf Q_r
\left(\overleftarrow{\prod_{j=u+1}^{r}}\mathbf M_j\right)K_uo_u.
\]
Exchanging the order of summation over the region
$t\leq u\leq r\leq T$ gives
\[
\sum_{u=t}^{T}
\left[
\sum_{r=u}^{T}
\left(\overrightarrow{\prod_{j=t}^{r-1}}J_j\right)
\mathbf Q_r
\left(\overleftarrow{\prod_{j=u+1}^{r}}\mathbf M_j\right)K_u
\right]o_u.
\]
Hence,
\begin{equation}
\mathbf H^{(o)}_{t,u}
:=
\sum_{r=u}^{T}
\left(\overrightarrow{\prod_{j=t}^{r-1}}J_j\right)
\mathbf Q_r
\left(\overleftarrow{\prod_{j=u+1}^{r}}\mathbf M_j\right)K_u,
\qquad
u=t,\ldots,T.
\label{eq:app_Ho_definition}
\end{equation}
For the actions inherited from the filtering recursion, let $v=u-1$. Their
contribution becomes
\[
\sum_{v=t-1}^{T-1}
\left[
\sum_{r=v+1}^{T}
\left(\overrightarrow{\prod_{j=t}^{r-1}}J_j\right)
\mathbf Q_r
\left(\overleftarrow{\prod_{j=v+2}^{r}}\mathbf M_j\right)\mathbf U_{v+1}
\right]a_v.
\]
The direct RTS contribution is
\[
-
\sum_{v=t}^{T-1}
\left(\overrightarrow{\prod_{j=t}^{v-1}}J_j\right)
J_v\mathbf B_{v+1}a_v.
\]
Combining both contributions, for $v=t-1,\ldots,T-1$ define
\begin{equation}
\mathbf H^{(a)}_{t,v}
:=
\sum_{r=v+1}^{T}
\left(\overrightarrow{\prod_{j=t}^{r-1}}J_j\right)
\mathbf Q_r
\left(\overleftarrow{\prod_{j=v+2}^{r}}\mathbf M_j\right)\mathbf U_{v+1}
-
\mathbf 1_{\{v\geq t\}}
\left(\overrightarrow{\prod_{j=t}^{v-1}}J_j\right)
J_v\mathbf B_{v+1}.
\label{eq:app_Ha_definition}
\end{equation}
The indicator removes the direct RTS correction when $v=t-1$, since that
correction begins with the action $a_t$.

Collecting the contributions of $m_{t-1}$, the observations, and the actions
gives
\begin{equation}
m_{t|T}
=
\mathbf H^{(m)}_tm_{t-1}
+
\sum_{u=t}^{T}\mathbf H^{(o)}_{t,u}o_u
+
\sum_{v=t-1}^{T-1}\mathbf H^{(a)}_{t,v}a_v.
\label{eq:app_rts_final_representation}
\end{equation}
This is the representation stated in
Lemma~\ref{lem:explicit_o_a_rewrite}.
\end{proof}

\begin{proof}[Proof of Theorem~\ref{thm:naive_qcqp}]
By Lemma~\ref{lem:explicit_o_a_rewrite}, the RTS smoothed mean can be written
as
\begin{equation}
m_{t|T}
=
\mathbf H^{(m)}_tm_{t-1}
+
\sum_{u=t}^{T}\mathbf H^{(o)}_{t,u}o_u
+
\sum_{u=t-1}^{T-1}\mathbf H^{(a)}_{t,u}a_u.
\label{eq:app_affine_rts_recalled}
\end{equation}
Replacing only $o_t$ by $o_t^{\mathrm{adv}}$, while keeping all other
observations, actions, and $m_{t-1}$ fixed, gives
\begin{equation}
m_{t|T}^{\mathrm{adv}}
=
\mathbf H^{(m)}_tm_{t-1}
+
\mathbf H^{(o)}_{t,t}o_t^{\mathrm{adv}}
+
\sum_{u=t+1}^{T}\mathbf H^{(o)}_{t,u}o_u
+
\sum_{u=t-1}^{T-1}\mathbf H^{(a)}_{t,u}a_u.
\label{eq:app_adversarial_rts_mean}
\end{equation}
Subtracting \eqref{eq:app_affine_rts_recalled} from
\eqref{eq:app_adversarial_rts_mean}, all terms except the contribution of
$o_t$ cancel. Therefore,
\begin{equation}
m_{t|T}^{\mathrm{adv}}-m_{t|T}
=
\mathbf H^{(o)}_{t,t}
\left(o_t^{\mathrm{adv}}-o_t\right).
\label{eq:app_adversarial_mean_difference}
\end{equation}
The squared Euclidean displacement of the smoothed state estimate is thus
\begin{equation}
\left\|
m_{t|T}^{\mathrm{adv}}-m_{t|T}
\right\|_2^2
=
\left(o_t^{\mathrm{adv}}-o_t\right)^\top
\left(\mathbf H^{(o)}_{t,t}\right)^\top
\mathbf H^{(o)}_{t,t}
\left(o_t^{\mathrm{adv}}-o_t\right).
\label{eq:app_adversarial_mean_quadratic}
\end{equation}
Using the definition
\[
\mathbf R_t
=
\left(\mathbf H^{(o)}_{t,t}\right)^\top
\mathbf H^{(o)}_{t,t},
\]
we obtain
\begin{equation}
\left\|
m_{t|T}^{\mathrm{adv}}-m_{t|T}
\right\|_2^2
=
\left(o_t^{\mathrm{adv}}-o_t\right)^\top
\mathbf R_t
\left(o_t^{\mathrm{adv}}-o_t\right).
\label{eq:app_attack_objective}
\end{equation}
Moreover, $\mathbf R_t$ is positive semidefinite, since for every
$x\in\mathbb R^{d_o}$,
\[
x^\top\mathbf R_tx
=
x^\top
\left(\mathbf H^{(o)}_{t,t}\right)^\top
\mathbf H^{(o)}_{t,t}x
=
\left\|
\mathbf H^{(o)}_{t,t}x
\right\|_2^2
\geq0.
\]
By Lemma~\ref{lem:loo_pred_obs_past_future}, the leave-one-out predictive
density is
\begin{equation}
p(o_t\mid o_{-t},a_{0:T-1})
=
(2\pi)^{-d_o/2}|S_{-t}|^{-1/2}
\exp\left\{
-\frac{1}{2}
(o_t-\hat o_{-t})^\top
S_{-t}^{-1}
(o_t-\hat o_{-t})
\right\}.
\label{eq:app_loo_gaussian_density}
\end{equation}
Evaluating this density at $o_t^{\mathrm{adv}}$, the plausibility constraint
\begin{equation}
p(o_t^{\mathrm{adv}}\mid o_{-t},a_{0:T-1})
\geq\epsilon
\label{eq:app_likelihood_constraint}
\end{equation}
is equivalent to
\begin{equation}
(2\pi)^{-d_o/2}|S_{-t}|^{-1/2}
\exp\left\{
-\frac{1}{2}
(o_t^{\mathrm{adv}}-\hat o_{-t})^\top
S_{-t}^{-1}
(o_t^{\mathrm{adv}}-\hat o_{-t})
\right\}
\geq\epsilon.
\label{eq:app_likelihood_constraint_expanded}
\end{equation}
Taking logarithms gives
\begin{equation}
-\frac{d_o}{2}\log(2\pi)
-\frac{1}{2}\log|S_{-t}|
-\frac{1}{2}
(o_t^{\mathrm{adv}}-\hat o_{-t})^\top
S_{-t}^{-1}
(o_t^{\mathrm{adv}}-\hat o_{-t})
\geq
\log\epsilon.
\label{eq:app_log_likelihood_constraint}
\end{equation}
Multiplying by $-2$ and reversing the inequality yields
\begin{equation}
(o_t^{\mathrm{adv}}-\hat o_{-t})^\top
S_{-t}^{-1}
(o_t^{\mathrm{adv}}-\hat o_{-t})
\leq
-2\log\epsilon
-d_o\log(2\pi)
-\log|S_{-t}|.
\label{eq:app_ellipsoidal_constraint}
\end{equation}
By the definition of $\rho_\epsilon$ in
\eqref{eq:rho_epsilon_definition}, this becomes
\begin{equation}
(o_t^{\mathrm{adv}}-\hat o_{-t})^\top
S_{-t}^{-1}
(o_t^{\mathrm{adv}}-\hat o_{-t})
\leq
\rho_\epsilon.
\label{eq:app_ellipsoidal_constraint_rho}
\end{equation}
The condition
\[
\epsilon
\leq
(2\pi)^{-d_o/2}|S_{-t}|^{-1/2}
\]
ensures that $\rho_\epsilon\geq0$, and hence that the ellipsoidal feasible
region is nonempty.

Combining the quadratic objective
\eqref{eq:app_attack_objective} with the ellipsoidal constraint
\eqref{eq:app_ellipsoidal_constraint_rho} gives
\[
\max_{o_t^{\mathrm{adv}}\in\mathbb R^{d_o}}
\left(o_t^{\mathrm{adv}}-o_t\right)^\top
\mathbf R_t
\left(o_t^{\mathrm{adv}}-o_t\right)
\]
subject to
\[
(o_t^{\mathrm{adv}}-\hat o_{-t})^\top
S_{-t}^{-1}
(o_t^{\mathrm{adv}}-\hat o_{-t})
\leq
\rho_\epsilon,
\]
which is precisely the optimization problem in
\eqref{eq:naive_qcqp}.
\end{proof}

\subsection{Proof of \autoref{subsec:rl_attack}}
\label{app:state_estimation_substitution_error}

\begin{proof}[Proof of Theorem~\ref{thm:state_estimation_substitution_error}]
Under the clean posterior, define \(\delta_T
=
s_T^0-m_T.\)  Since \(m_T\) and \(P_T\) are the posterior mean and covariance,
\begin{equation}
\label{eq:posterior_error_moments}
\mathbb E
\left[
\delta_T
\right]
=
0,
\qquad
\mathbb E
\left[
\delta_T
\delta_T^\top
\right]
=
P_T.
\end{equation}
For any fixed action \(a\), Taylor's theorem with integral remainder gives \begin{equation}
Q^\pi
\left(
s_T^0,
a
\right)
-
Q^\pi
\left(
m_T,
a
\right)
=
\nabla_s
Q^\pi
\left(
m_T,
a
\right)^\top
\delta_T+
\int_0^1
(1-\tau)
\delta_T^\top
\nabla_{ss}^{2}
Q^\pi
\left(
m_T+\tau\delta_T,
a
\right)
\delta_T
\,d\tau.
\label{eq:exact_taylor_state_error}
\end{equation}
Taking expectation with respect to \(s_T^0\) eliminates the
first-order term as \(\mathbb E_{s_T^0}[\delta_T]=0\),
\begin{align}
\left|
\mathbb E_{s_T^0}
\left[
Q^\pi
\left(
s_T^0,
a
\right)
-
Q^\pi
\left(
m_T,
a
\right)
\right]
\right|
&\leq
\mathbb E_{s_T^0}
\left[
\int_0^1
(1-\tau)
\left|
\delta_T^\top
\nabla_{ss}^{2}
Q^\pi
\left(
m_T+\tau\delta_T,
a
\right)
\delta_T
\right|
d\tau
\right]
\nonumber\\
&\leq
M(a)
\int_0^1
(1-\tau)
\mathbb E_{s_T^0}
\left[
\left\|
\delta_T
\right\|^2
\right]
d\tau=
\frac{M(a)}{2}
\operatorname{tr}
\left(
P_T
\right)\nonumber\\
&\leq
\frac{M_{\max}}{2}
\operatorname{tr}
\left(
P_T
\right),
\label{eq:action_state_substitution_bound}
\end{align}
where we have used
\(
\left|
x^\top Bx
\right|
\leq
\left\|
B
\right\|_{\mathrm{op}}
\left\|
x
\right\|^2
\),
together with
\(
\left\|
\nabla_{ss}^{2}Q^\pi(s,a)
\right\|_{\mathrm{op}}
\leq
M(a)
\leq
M_{\max}
\),
and
\[
\mathbb E_{s_T^0}
\left[
\left\|
\delta_T
\right\|^2
\right]
=
\operatorname{tr}
\left(
\mathbb E_{s_T^0}
\left[
\delta_T\delta_T^\top
\right]
\right)
=
\operatorname{tr}
\left(
P_T
\right).
\]

We now take expectations with respect to the attacked state
\(s_T^{\mathrm{adv}}\) and the induced action \(a_T\). By the definition of
\(\overline{\mathcal E}_T(o_T^{\mathrm{adv}})\), Fubini's theorem and triangle inequality,
\begin{align}
\left|
\overline{\mathcal E}_T
\left(
o_T^{\mathrm{adv}}
\right)
\right|
\leq
\mathbb E_{
s_T^{\mathrm{adv}}
}
\Bigg[
\mathbb E_{
a_T
}
\Bigg[
\left|
\mathbb E_{
s_T^0
}
\Big[
Q^\pi
\left(
s_T^0,
a_T
\right)
-
Q^\pi
\left(
m_T,
a_T
\right)
\Big]
\right|
\Bigg]
\Bigg]\leq
\frac{M_{\max}}{2}
\operatorname{tr}
\left(
P_T
\right).
\label{eq:expected_state_substitution_bound}
\end{align}
Finally, the Kalman update gives
\begin{equation}
P_T
=
\left(
P_{T\mid T-1}^{-1}
+
\mathbf F_T^\top
\mathbf V_T^{-1}
\mathbf F_T
\right)^{-1}
=\left(
\left(
\mathbf A_T
P_{T-1}
\mathbf A_T^\top
+
\mathbf W_T
\right)^{-1}
+
\mathbf F_T^\top
\mathbf V_T^{-1}
\mathbf F_T
\right)^{-1}.
\label{eq:posterior_covariance_rl_attack}
\end{equation}

Substituting \eqref{eq:posterior_covariance_rl_attack} into the previous bound
proves the result.

\end{proof}

\FloatBarrier

\section{Additional experimental results}
\label{app:experimental}
This section provides additional figures that can help the reader gain further insight into the results of experiments in \autoref{subsec:attack_rl_settings}

\begin{figure}[ht]
    \centering
    \includegraphics[width=0.95\linewidth]{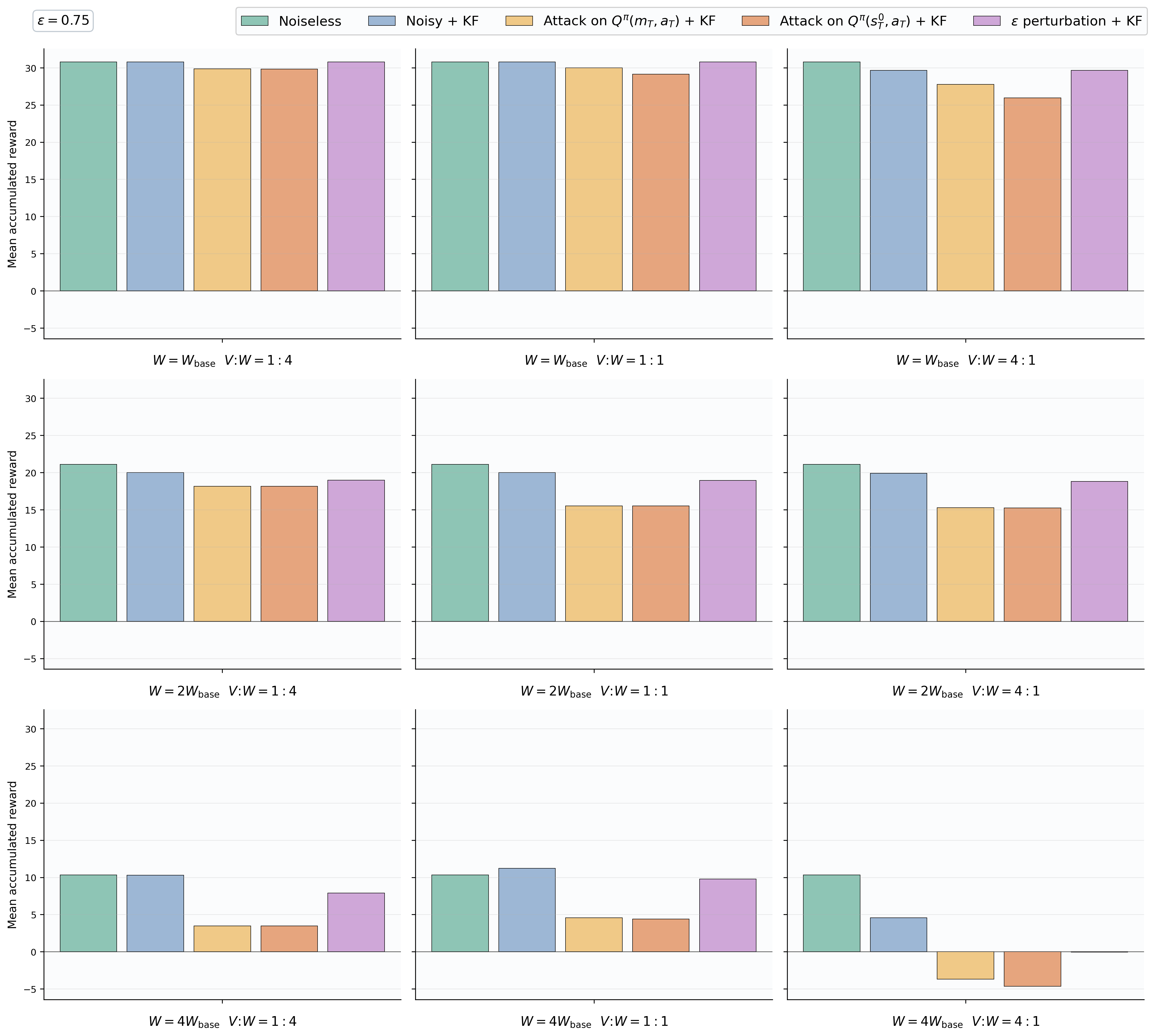}
    \caption{Mean cumulative reward over \(100\) episodes under different
    uncertainty configurations for constraints of $\epsilon = 0.75$.}
    \label{fig:95covaraicnesweep}
\end{figure}

\begin{figure}[ht]
    \centering
    \includegraphics[width=0.95\linewidth]{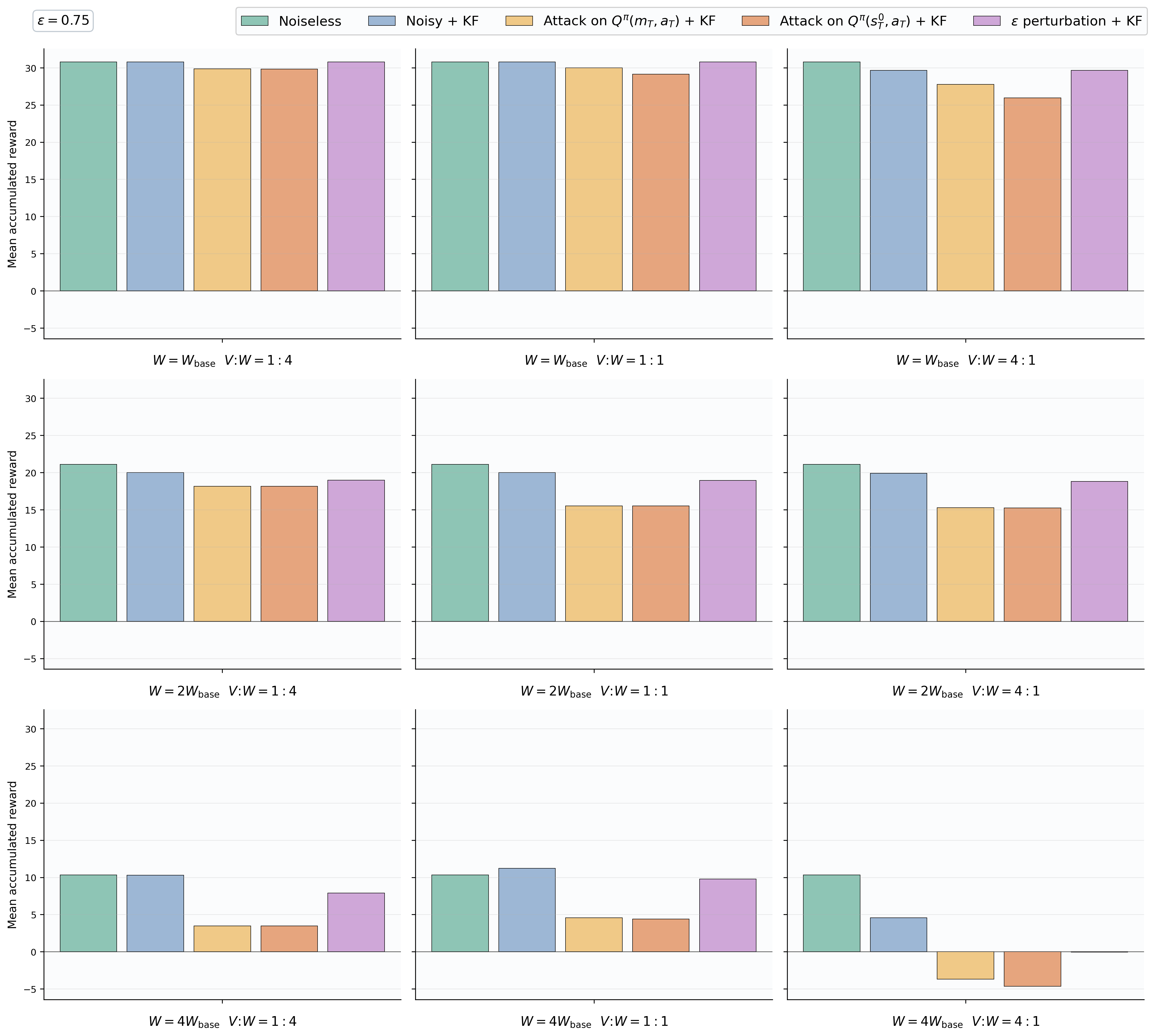}
    \caption{Mean cumulative reward over \(100\) episodes under different
    uncertainty configurations for constraints of $\epsilon = 0.95$.}
    \label{fig:75covaraincesweep}
\end{figure}

\begin{figure}[ht]
    \centering
    \includegraphics[width=0.95\linewidth]{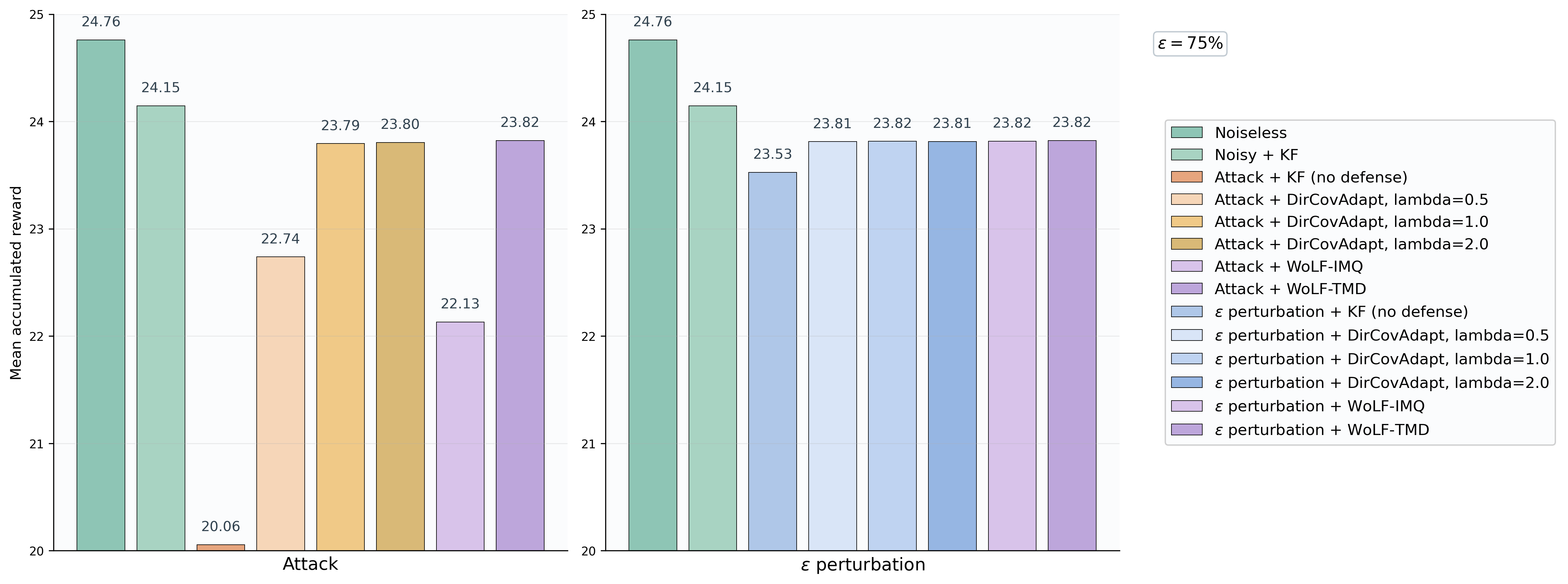}
    \caption{Mean cumulative reward over \(250\) episodes ($\epsilon=0.75$). \textbf{Left:} Performance of the \emph{noiseless}, \emph{noisy + KF}, and \emph{attack + KF} settings, together with the proposed directional covariance-adaptation defense and the WoLF baselines under adversarial attacks. \textbf{Right:} Performance of the \emph{noiseless}, \emph{noisy + KF}, and \emph{\(\epsilon\)-perturbation + KF} settings, together with the proposed directional covariance-adaptation defense and the WoLF baselines under random \(\epsilon\)-perturbations.}
    \label{fig:defense_benchmark75.png}
\end{figure}

\end{document}